\documentclass[11pt]{article}

\pdfoutput=1

\usepackage{natbib}

\usepackage[T1]{fontenc}
\usepackage[latin9]{inputenc}
\usepackage{float}
\usepackage{fullpage,times,color}
\usepackage{boxedminipage}

\usepackage{amsmath,amsthm,amssymb}

\usepackage{graphicx}
\usepackage{ltxtable} % needed for vertical text centering hack -cotter
\newcolumntype{L}{>{\centering\arraybackslash} m{0.04\columnwidth}} % -cotter
\newcolumntype{R}{>{\centering\arraybackslash} m{0.48\columnwidth}} % -cotter
\newcolumntype{S}{>{\centering\arraybackslash} m{0.32\columnwidth}} % -cotter

\newcommand{\tE}{{\mathbb E}_t}

\newtheorem{lemma}{Lemma}

\newtheorem{theorem}{Theorem}

  % end of proof
%\newenvironment{proof}{\par\noindent{\bf Proof\ }}{\hfill\BlackBox\\[2mm]}

\DeclareMathOperator*{\E}{\mathbb{E}}

\DeclareMathOperator*{\argmin}{argmin} 
 
\newcommand{\reals}{\mathbb{R}}

\newcommand{\figref}[1]{Figure~\ref{#1}}

\newcommand{\secref}[1]{Section~\ref{#1}}

\newcommand{\thmref}[1]{Theorem~\ref{#1}}

\newcommand{\lemref}[1]{Lemma~\ref{#1}}

\newcommand{\todo}[1]{\textbf{#1}}

\newenvironment{myalgo}[1]%
{
%\vspace{0.3cm}
\begin{center}
\begin{boxedminipage}{0.8\linewidth}
\begin{center}
\textbf{\texttt{#1}}
\end{center}
\rm
\begin{tabbing}
....\=...\=...\=...\=...\=  \+ \kill
} %
{\end{tabbing} 
\end{boxedminipage} \end{center} %\vspace{0.3cm}
}

\title{Accelerated  Mini-Batch Stochastic Dual Coordinate Ascent}
\author{Shai Shalev-Shwartz\\
School of Computer Science and Engineering \\
Hebrew University, Jerusalem, Israel
 \and 
Tong Zhang \\
Department of Statistics \\
Rutgers University, NJ, USA}
\date{}

\begin{document}

\maketitle

\begin{abstract}
  Stochastic dual coordinate ascent (SDCA) is an effective technique for solving
  regularized loss minimization problems in machine learning.
  This paper considers an extension of SDCA under the mini-batch setting that is often used in practice.
  Our main contribution is to introduce an accelerated mini-batch version of SDCA and prove a fast convergence rate for this method.  
  We discuss an implementation of our method over a parallel computing system, and compare the results to both the
  vanilla stochastic dual coordinate ascent and to the accelerated
  deterministic gradient descent method of \cite{nesterov2007gradient}.
\end{abstract}

\section{Introduction}

% The problem
We consider the following generic optimization problem. 
Let $\phi_1,\ldots,\phi_n$ be a sequence of vector convex functions from
$\reals^d$ to $\reals$, and let $g : \reals^d \to \reals$ be a
strongly convex regularization function.
Our goal is to solve $\min_{x \in \reals^d} P(x)$ where
\begin{equation} \label{eqn:PrimalProblem}
P(x) = \left[ \frac{1}{n} \sum_{i=1}^n \phi_i(x) +  g(x) \right] .
\end{equation}
For example, given a sequence of $n$ training examples
$(v_1,y_1),\ldots,(v_n,y_n)$, where $v_i \in \reals^d$ and $y_i \in
\reals$, ridge regression is obtained by setting $g(x) =
\frac{\lambda}{2} \|x\|^2$ and $\phi_i(x) = (x^\top v_i-y_i)^2$.
Regularized logistic regression is obtained by setting
$\phi_i(x) = \log(1+\exp(-y_i x^\top v_i))$. 

The \emph{dual} problem of \eqref{eqn:PrimalProblem} is defined as
follows: For each $i$, let $\phi_i^* : \reals^d \to \reals$ be the
convex conjugate of $\phi_i$, namely, $\phi_i^*(u) = \max_{z \in
  \reals^d} (z^\top u - \phi_i(z))$. Similarly, let $g^*$ be the convex
conjugate of $g$. The dual problem is:
\begin{equation} \label{eqn:DualProblem}
\max_{\alpha \in \reals^{d \times n}} D(\alpha) ~~~\textrm{where}~~~ D(\alpha) = 
\left[ \frac{1}{n} \sum_{i=1}^n -\phi_i^*(-\alpha_i) -  g^*\left( \tfrac{1}{n} \sum_{i=1}^n \alpha_i \right) \right] ~,
\end{equation}
where for each $i$, $\alpha_i$ is the $i$'th column of the matrix
$\alpha$. 

The dual objective has a different dual vector associated with each
primal function. Dual Coordinate Ascent (DCA) methods solve the dual
problem iteratively, where at each iteration of DCA, the dual
objective is optimized with respect to a single dual vector, while the
rest of the dual vectors are kept in tact.  Recently,
\cite{ShalevZh2013} analyzed a stochastic version of dual coordinate
ascent, abbreviated by SDCA, in which at each round we choose which
dual vector to optimize uniformly at random. In particular, let $x^*$
be the optimum of \eqref{eqn:PrimalProblem}. We say that a solution
$x$ is $\epsilon$-accurate if $P(x) - P(x^*) \le \epsilon$. \cite{ShalevZh2013} have
derived the following convergence guarantee for SDCA: If $g(x) =
\frac{\lambda}{2}\|x\|_2^2$ and each $\phi_i$ is $\gamma$-smooth, then
for every $\epsilon > 0$, if we run SDCA for at least
\[
\left(n + \tfrac{1}{\lambda \gamma}\right) \, \log( (n +
\tfrac{1}{\lambda \gamma}) \cdot \tfrac{1}{\epsilon})
\]
iterations, then the solution of the SDCA algorithm
will be $\epsilon$-accurate (in expectation). 
This convergence rate is significantly better than the more commonly studied 
stochastic gradient descent (SGD) methods that are related to
SDCA\footnote{An exception is the recent analysis given in
  \cite{LSB12-sgdexp} for a variant of SGD.}.

Another approach to solving \eqref{eqn:PrimalProblem} is deterministic
gradient descent methods. In particular, \cite{nesterov2007gradient}
proposed an accelerated gradient descent (AGD) method for solving
\eqref{eqn:PrimalProblem}. Under the same conditions mentioned above,
AGD finds an $\epsilon$-accurate solution after performing
\[
O\left(\frac{1}{\sqrt{\lambda \gamma}} \log( \tfrac{1}{\epsilon} ) \right)
\]
iterations. 

The advantage of SDCA over AGD is that each iteration involves only a
single dual vector and in general costs $O(d)$. In contrast, each
iteration of AGD requires $\Omega(nd)$ operations. On the other hand,
AGD has a better dependence on the condition number of the problem ---
the iteration bound of AGD scales with $1/\sqrt{\lambda \gamma}$ while
the iteration bound of SDCA scales with $1/(\lambda \gamma)$. 

In this paper we describe and analyze a new algorithm that
interpolates between SDCA and AGD. At each iteration of the algorithm,
we randomly pick a subset of $m$ indices from $\{1,\ldots,n\}$ and
update the dual vectors corresponding to this subset. This subset is
often called a mini-batch. The use of mini-batches is common with SGD
optimization, and it is beneficial when the processing time of a
mini-batch of size $m$ is much smaller than $m$ times the processing
time of one example (mini-batch of size $1$).  For example, in the
practical training of neural networks with SGD, one is always advised
to use mini-batches because it is more efficient to perform
matrix-matrix multiplications over a mini-batch than an equivalent
amount of matrix-vector multiplication operations (each over a single
training example).  This is especially noticeable when GPU is used: in
some cases the processing time of a mini-batch of size 100 may be the
same as that of a mini-batch of size 10.  Another typical use of
mini-batch is for parallel computing, which was studied by various
authors for stochastic gradient descent (e.g.,
\cite{dekel2012optimal}). This is also the application scenario we
have in mind, and will be discussed in greater details in
Section~\ref{sec:parallel}.

Recently, \cite{takac2013mini} studied mini-batch variants of SDCA in
the context of the Support Vector Machine (SVM) problem. They have
shown that the naive mini-batching method, in which $m$ dual variables
are optimized in parallel, might actually increase the number of
iterations required. They then describe several ``safe'' mini-batching
schemes, and based on the analysis of \cite{ShalevZh2013}, have shown
several speed-up results. However, their results are for the
non-smooth case and hence they do not obtain linear convergence
rate. In addition, the speed-up they obtain requires some spectral
properties of the training examples. We take a different approach and
employ Nesterov's acceleration method, which has previously been
applied to mini-batch SGD optimization.  This paper shows how to
achieve acceleration for SDCA in the mini-batch setting.  The pseudo
code of our Accelerated Mini-Batch SDCA, abbreviated by ASDCA, is
presented below.

%\begin{figure}[htbp]
\begin{myalgo}{Procedure Accelerated Mini-Batch SDCA} 
\textbf{Parameters}  scalars $\lambda,\gamma$ and $\theta \in [0,1]$
~;~ mini-batch size $m$\\
\textbf{Initialize} $\alpha_1^{(0)}=\cdots = \alpha_n^{(0)}=\bar{\alpha}^{(t)}=0$, $x^{(0)}=0$ \\
\textbf{Iterate:} for $t=1,2,\dots$ \+ \\
$u^{(t-1)} = (1-\theta) x^{(t-1)} + \theta \nabla g^*(\bar{\alpha}^{(t-1)})$ \\
 Randomly pick subset $I \subset \{1,\ldots,n\}$ of size $m$ and update the dual variables in $I$ \+\+ \\
$\alpha^{(t)}_i = (1-\theta) \alpha^{(t-1)}_i - \theta \nabla \phi_i(u^{(t-1)})$ for $i \in I$ \\
$\alpha^{(t)}_j = \alpha^{(t-1)}_j$  for $j \notin I$ \- \- \\
$\bar{\alpha}^{(t)}= \bar{\alpha}^{(t-1)} + n^{-1} \sum_{i \in I} (\alpha_i^{(t)}-\alpha_i^{(t-1)})$ \\
$x^{(t)} = (1-\theta) x^{(t-1)} + \theta \nabla g^*(\bar{\alpha}^{(t)})$ \- \\
\textbf{end} 
\end{myalgo}
%\caption{Accelerated Proximal Stochastic Dual Coordinate Ascent Algorithm}
%\label{fig:sdca}
%\end{figure}

In the next section we present our main result --- an analysis of the
number of iterations required by ASDCA. We focus on the case of
Euclidean regularization, namely, $g(x) = \frac{\lambda}{2}
\|x\|^2$. Analyzing more general strongly convex regularization
functions is left for future work. In \secref{sec:parallel} we discuss
parallel implementations of ASDCA and compare it to parallel
implementations of AGD and SDCA.  In particular, we explain in which
regimes ASDCA can be better than both AGD and SDCA. In
\secref{sec:experimental} we present some experimental results,
demonstrating how ASDCA interpolates between AGD and SDCA. The proof
of our main theorem is presented in \secref{sec:proof}. We conclude
with a discussion of our work in light of related works in
\secref{sec:related}.

\section{Main Results} \label{sec:main}

Our main result is a bound on the number of iterations required by
ASDCA to find an $\epsilon$-accurate solution. In our analysis, we
only consider the squared Euclidean norm regularization,
\[
g(x) = \frac{\lambda}{2} \|x\|^2 ,
\]
where $\|\cdot\|$ is the Euclidean norm and $\lambda > 0$ is a
regularization parameter. The analysis for general $\lambda$-strongly
convex regularizers is left for future work.  For the squared
Euclidean norm we have
\[
g^*(\alpha) = \frac{1}{2\lambda} \| \alpha\|^2 ,
\]
and
\[
\nabla g^*(\alpha) = \frac{1}{\lambda} \alpha ~.
\]
We further assume that each $\phi_i$ is $1/\gamma$-smooth with respect
to $\|\cdot\|$, namely, 
\[
\forall x,z,~~~
\phi_i(x) \leq \phi_i(z) + \nabla \phi_i(z)^\top (x-z) + \frac{1}{2
  \gamma} \|x-z\|^2 .
\]
For example, if $\phi_i(x) = (x^\top v_i - y_i)^2$, then it is
$\|v_i\|^2$-smooth. 

The smoothness of $\phi_i$ also implies that $\phi_i^*(\alpha)$ is
$\gamma$-strongly convex: 
\[
\forall \theta \in [0,1],~~\phi_i^*((1-\theta) \alpha + \theta \beta) \leq (1-\theta) \phi_i^*(\alpha) + \theta \phi_i^*(\beta) - \frac{\theta (1-\theta) \gamma }{2} \|\alpha-\beta\|^2 ,
\]

\begin{theorem} \label{thm:main} Assume that $g(x) =
  \frac{1}{2\lambda} \|x\|_2^2$ and for each $i$, $\phi_i$ is
  $(1/\gamma)$-smooth w.r.t. the Euclidean norm. Suppose that the
  ASDCA algorithm is run with parameters $\lambda,\gamma,m,\theta$,
  where
% \[
% \theta \leq 
% \min\left[ \sqrt{ \left(\frac{1}{m} - \frac{1}{n}\right)\lambda \gamma n} , \frac{1}{1 + 2/(\lambda \gamma n)} \right] .
% \]
\begin{equation} \label{eqn:thetadef}
\theta \le \frac{1}{4} \min\left\{
1 ~,~ \sqrt{\frac{\gamma \lambda n}{m}} ~,~ \gamma \lambda n ~,~
\frac{(\gamma \lambda n)^{2/3}}{m^{1/3}} \right\} ~.
\end{equation}
Define the dual sub-optimality by $\Delta D(\alpha) =
D(\alpha^*)-D(\alpha)$, where $\alpha^*$ is the optimal dual solution,
and the primal sub-optimality by $\Delta P(x) = P(x) -
D(\alpha^*)$. Then,
\[
m \E \Delta P(x^{(t)}) + n \E \Delta D(\alpha^{(t)}) \leq (1-\theta m/n)^t [ m \Delta P(x^{(0)}) + n \Delta D(\alpha^{(0)})] .
\]
It follows that after performing
\[
t \ge  \frac{n/m}{\theta} \, \log\left(\frac{m \Delta P(x^{(0)}) + n \Delta D(\alpha^{(0)})}{m\epsilon}\right) 
\]
iterations, we have that $\E[P(x^{(t)})- D(\alpha^{(t)})] \le \epsilon$. 
\end{theorem}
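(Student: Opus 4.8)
The plan is to reduce the whole statement to a single one-step contraction for the Lyapunov function $\Psi_t := m\,\Delta P(x^{(t)}) + n\,\Delta D(\alpha^{(t)})$. Concretely, I would aim to prove the conditional inequality
\[
\tE[\Psi_t] \le (1-\theta m/n)\,\Psi_{t-1},
\]
where $\tE$ is the expectation over the random mini-batch $I$ drawn at step $t$, conditioned on the history through step $t-1$. Iterating this and taking total expectations gives the geometric decay of $\E[\Psi_t]$. The iteration count then follows by elementary algebra: since $m\le n$ and $\Delta P,\Delta D\ge 0$, the bound $\E[\Psi_t]\le m\epsilon$ forces $\E[P(x^{(t)})-D(\alpha^{(t)})] = \E[\Delta P(x^{(t)})+\Delta D(\alpha^{(t)})] \le \epsilon$, and solving $(1-\theta m/n)^t \Psi_0 \le m\epsilon$ using $\log(1-\theta m/n)\le -\theta m/n$ yields the displayed $t$.

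For the dual side I would introduce the target directions $\tilde\alpha_i := -\nabla \phi_i(u^{(t-1)})$, so that the update reads $\alpha_i^{(t)} = (1-\theta)\alpha_i^{(t-1)} + \theta\tilde\alpha_i$ for $i\in I$. To the $-\phi_i^*$ terms I apply the $\gamma$-strong convexity of $\phi_i^*$ stated above, which extracts the gain $\tfrac{\theta(1-\theta)\gamma}{2}\|\alpha_i^{(t-1)}-\tilde\alpha_i\|^2$, and I use the Fenchel--Young equality $-\phi_i^*(-\tilde\alpha_i) = \tilde\alpha_i^\top u^{(t-1)} + \phi_i(u^{(t-1)})$ (valid because $-\tilde\alpha_i=\nabla\phi_i(u^{(t-1)})$) to re-express the linear parts through the primal functions evaluated at the look-ahead point $u^{(t-1)}$. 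For the $-g^*$ term I use that $g^*(\alpha)=\tfrac{1}{2\lambda}\|\alpha\|^2$ is $\tfrac1\lambda$-smooth, expanding $g^*(\bar\alpha^{(t)})$ around $\bar\alpha^{(t-1)}$ with the increment $\bar\alpha^{(t)}-\bar\alpha^{(t-1)} = \tfrac{\theta}{n}\sum_{i\in I}(\tilde\alpha_i-\alpha_i^{(t-1)})$.

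On the primal side I would bound $P(x^{(t)})$ from above using the $(1/\gamma)$-smoothness and convexity of the $\phi_i$ together with the convexity of $g$, exploiting that $x^{(t)} = (1-\theta)x^{(t-1)} + \theta\nabla g^*(\bar\alpha^{(t)})$ is a convex combination and that $u^{(t-1)}$ serves as the common gradient-evaluation point for both estimates --- this is exactly the Nesterov coupling between the primal iterate, the dual average, and the extrapolation point. Combining the primal and dual inequalities and then taking $\tE$ over $I$ is where the substance lies: each index enters $I$ with probability $m/n$, so the strong-convexity gains and first-order terms scale by $m/n$, while the quadratic smoothness cost from $g^*$ produces the second moment $\tE\big\|\sum_{i\in I}(\tilde\alpha_i-\alpha_i^{(t-1)})\big\|^2$, which expands into a diagonal part proportional to $\tfrac{m}{n}\sum_i\|\tilde\alpha_i-\alpha_i^{(t-1)}\|^2$ and an off-diagonal ``full-gradient'' part proportional to $\tfrac{m(m-1)}{n(n-1)}\big\|\sum_i(\tilde\alpha_i-\alpha_i^{(t-1)})\big\|^2$.

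I expect the main obstacle to be absorbing these two second-moment terms into the contraction budget, which is precisely what dictates the four-way minimum in \eqref{eqn:thetadef}. The diagonal term must be dominated by the per-coordinate strong-convexity gain $\tfrac{\theta(1-\theta)\gamma}{2}\sum\|\tilde\alpha_i-\alpha_i^{(t-1)}\|^2$ --- this is the ordinary single-example SDCA balance and should generate the $\gamma\lambda n$ and $\sqrt{\gamma\lambda n/m}$ constraints --- whereas the off-diagonal full-gradient term has no matching per-coordinate gain and must instead be charged against the primal progress, i.e. against the distance from $u^{(t-1)}$ to the optimum and against $\Delta P(x^{(t-1)})$; controlling it is what forces the more stringent $(\gamma\lambda n)^{2/3}/m^{1/3}$ bound. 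The delicate bookkeeping is deciding which slice of the strong-convexity and primal-progress budgets pays for each residual quadratic; once $\theta$ is small enough that every such term is dominated, the combined inequality collapses to $\tE[\Psi_t]\le (1-\theta m/n)\Psi_{t-1}$, and the rest is standard Fenchel-duality manipulation.
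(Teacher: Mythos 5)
Your skeleton matches the paper's: the same Lyapunov function $m\Delta P + n\Delta D$, the same one-step conditional contraction $\tE[\Psi_t]\le(1-\theta m/n)\Psi_{t-1}$, the same dual-side ingredients (strong convexity of $\phi_i^*$ plus the Fenchel--Young equality at $\nabla\phi_i(u^{(t-1)})$), the same primal-side smoothness/convexity bounds with $u^{(t-1)}$ as the common linearization point, and the same concluding algebra. Your second-moment bookkeeping (diagonal versus off-diagonal terms, with inclusion probabilities $m/n$ and $\tfrac{m(m-1)}{n(n-1)}$) is an exact-identity variant of the paper's mean/variance split in \lemref{lem:var}, which instead discards the cross terms by a negative-correlation (sampling without replacement) argument; either works.

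The gap is in the one step that carries all the weight: how the off-diagonal (``full-gradient'') quadratic, proportional to $\|\tE\Delta\bar{\alpha}^{(t)}\|^2 = \tfrac{\theta^2m^2}{n^2}\|\bar{\alpha}^{(t-1)}+\nabla f(u^{(t-1)})\|^2$, gets cancelled. You propose to charge it against the distance from $u^{(t-1)}$ to the optimum and against $\Delta P(x^{(t-1)})$; the paper never does this, and nothing in your outline supplies the inequality such a charge would require (a bound on $\|\bar{\alpha}^{(t-1)}+\nabla f(u^{(t-1)})\|^2$ by suboptimality, together with slack reserved out of the $(1-\theta m/n)\Psi_{t-1}$ budget --- a budget your own reduction has already spent in full, since you demand the clean contraction with factor exactly $1-\theta m/n$). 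The paper's actual mechanism needs no reference to the optimum: by the expected-update identity of \lemref{lem:DeltatE}, $\nabla f(u^{(t-1)}) = -\bar{\alpha}^{(t-1)} - \tfrac{n}{\theta m}\tE\Delta\bar{\alpha}^{(t)}$, and since $x^{(t)}-(1-\theta)x^{(t-1)} = \tfrac{\theta}{\lambda}\bar{\alpha}^{(t)}$, the Nesterov coupling inner product $m\nabla f(u^{(t-1)})^\top(x^{(t)}-(1-\theta)x^{(t-1)})$ itself produces the negative quadratic $-\tfrac{n-m\theta}{2\lambda}\|\tE\Delta\bar{\alpha}^{(t)}\|^2$ (Step 2 of \lemref{lem:negExpectation}), and that is what absorbs the full-gradient term; the resulting balance $m\theta^2 + m\theta\gamma\lambda \le \gamma\lambda n$ is what yields the $\sqrt{\gamma\lambda n/m}$ constraint. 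Relatedly, your attribution of the constraints in \eqref{eqn:thetadef} is scrambled: in the paper the diagonal/variance term, absorbed by the strong-convexity gain, generates the constraints $1$, $\gamma\lambda n$ and $(\gamma\lambda n)^{2/3}/m^{1/3}$, while the off-diagonal term generates $\sqrt{\gamma\lambda n/m}$ --- not the other way around. Without the identity-based cancellation (or a fully worked-out substitute), the contraction does not close.
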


Let us now discuss the bound, assuming $\theta$ is taken to be the
right-hand side of \eqref{eqn:thetadef}. The dominating factor of the
bound on $t$ becomes
\begin{align}
\frac{n}{m\theta} &=  \frac{n}{m} \cdot \max\left\{
1 ~,~ \sqrt{\frac{m}{\gamma \lambda n}} ~,~ \frac{1}{\gamma \lambda n} ~,~
\frac{m^{1/3}}{(\gamma \lambda n)^{2/3}} \right\} \\
&=\max\left\{
\frac{n}{m}  ~,~ \sqrt{\frac{n/m}{\gamma \lambda }} ~,~ \frac{1/m}{\gamma \lambda} ~,~
\frac{n^{1/3}}{(\gamma \lambda m)^{2/3}} \right\} ~.
\end{align}

Table~\ref{tab:iteration-complexity} summarizes several interesting cases, and compares
the iteration bound of ASDCA to the iteration bound of the vanilla
SDCA algorithm (as analyzed in \cite{ShalevZh2013}) and the Accelerated
Gradient Descent (AGD) algorithm of \cite{nesterov2007gradient}.  In the table, we
ignore constants and logarithmic factors.

\begin{table}
\centering
\begin{tabular}{l|c|c|c} \hline
 Algorithm & $\gamma \lambda n = \Theta(1)$ & $\gamma \lambda n =
 \Theta(1/m)$ & $\gamma \lambda n = \Theta(m)$ \\ \hline
SDCA &  $n$ & $nm$ & $n$\\
ASDCA &  ${n}/{\sqrt{m}}$ & $n$ & ${n}/{m}$ \\
AGD & $\sqrt{n}$ & $\sqrt{nm}$ & $\sqrt{n/m}$\\ \hline
\end{tabular}
\caption{Comparison of Iteration Complexity}
\label{tab:iteration-complexity}
\end{table}

As can be seen in the table, the ASDCA algorithm interpolates between
SDCA and AGD. In particular, ASDCA has the same bound as SDCA when
$m=1$ and the same bound as AGD when $m=n$. Recall that the cost of
each iteration of AGD scales with $n$ while the cost of each iteration
of SDCA does not scale with $n$. The cost of each iteration of ASDCA
scales with $m$. To compensate for the difference cost per iteration for 
different algorithms, we may also compare the complexity in terms of 
the number of examples processed in Table~\ref{tab:example-complexity}.
This is also what we will study in our empirical experiments. 
It should be mentioned that this comparison is meaningful in a single processor
environment, but not in a parallel computing environment when multiple examples can be
processed simultaneiously in a minibatch.
% Although ASDCA does not improve complexity in terms of
% the number of examples processed when compared to the vanilla SDCA,
% we note that the use of mini-batch can help in a parallel computing environment, where multiple
% examples can be processed simultaneously.
In the next section we discuss under what conditions
the overall runtime of ASDCA is better than both AGD and SDCA.

\begin{table}
\centering
\begin{tabular}{l|c|c|c} \hline
 Algorithm & $\gamma \lambda n = \Theta(1)$ & $\gamma \lambda n =
 \Theta(1/m)$ & $\gamma \lambda n = \Theta(m)$ \\ \hline
SDCA &  $n$ & $nm$ & $n$\\
ASDCA &  $n \sqrt{m}$ & $n m$ & $n$ \\
AGD & $n\sqrt{n}$ & $n\sqrt{nm}$ & $n\sqrt{n/m}$\\ \hline
\end{tabular}
\caption{Comparison of Number of Examples Processed}
\label{tab:example-complexity}
\end{table}

% \begin{tabular}{l|c|c|c}
% \multicolumn{4}{c}{Assume: $\gamma \lambda n = \Theta(1)$} \\ \hline
%  Algorithm & \#iterations & runtime (with $m$ parallel machines) &
%  communication \\ \hline
% SDCA &  $n$ & $n$ & NONE\\
% \textbf{ASDCA} &  $\frac{n}{\sqrt{m}}$ & $\frac{n}{\sqrt{m}}$ &
% $\frac{n}{\sqrt{m}} \cdot \min\{d\log(m),m\}$ \\
% Accelerated gradient & $\sqrt{n}$ & $\frac{n}{\sqrt{m}} \cdot
% \sqrt{\frac{n}{m}}$ & $\sqrt{n} \cdot \min\{d \log(m),n\}$\\ \hline
% \end{tabular}

% ~\\~

% \begin{tabular}{l|c|c}
% \multicolumn{3}{c}{Assume: $\gamma \lambda n = \Theta(1/m)$} \\ \hline
%  Algorithm & \#iterations & runtime (with $m$ parallel machines) \\ \hline
% SDCA &  $nm$ & $nm$ \\
% \textbf{ASDCA} & $n$  & $n$ \\
% Accelerated gradient & $\sqrt{nm}$ & $n \cdot \sqrt{\frac{n}{m}}$ \\ \hline
% \end{tabular}

% ~\\~

% \begin{tabular}{l|c|c}
% \multicolumn{3}{c}{Assume: $\gamma \lambda n = \Theta(m)$} \\ \hline
%  Algorithm & \#iterations & runtime (with $m$ parallel machines) \\ \hline
% SDCA & $n$  & $n$ \\
% \textbf{ASDCA} & $\frac{n}{m}$ & $\frac{n}{m}$ \\
% Accelerated gradient & $\sqrt{\frac{n}{m}}$ & $\frac{n}{m} \cdot \sqrt{\frac{n}{m}}$ \\ \hline
% \end{tabular}

\section{Parallel Implementation}  \label{sec:parallel}

In recent years, there has been a lot of interest in implementing
optimization algorithms using a parallel computing architecture (see
\secref{sec:related}). We now discuss how to implement AGD, SDCA, and
ASDCA when having a computing machine with $s$ parallel computing
nodes.

 In the calculations below, we use the following facts:
\begin{itemize}

\item If each node holds a $d$-dimensional vector, we can compute the
  sum of these vectors in time $O(d\log(s))$ by applying a
  ``tree-structure'' summation (see for example the All-Reduce
  architecture in \cite{agarwal2011reliable}).

\item A node can broadcast a message with $c$ bits to all other nodes
  in time $O(c\log^2(s))$. To see this, order nodes on the corners of
  the $\log_2(s)$-dimensional hypercube.  Then, at each iteration,
  each node sends the message to its $\log(s)$ neighbors (namely, the
  nodes whose code word is at a hamming distance of $1$ from the
  node). The message between the furthest away nodes will pass after
  $\log(s)$ iterations. Overall, we perform $\log(s)$ iterations and
  each iteration requires transmitting $c\log(s)$ bits.

\item All nodes can broadcast a message with $c$ bits to all other
  nodes in time $O(cs \log^2(s))$. To see this, simply apply the
  broadcasting of the different nodes mentioned above in parallel. The
  number of iterations will still be the same, but now, at each
  iteration, each node should transmit $cs$ bits to its $\log(s)$
  neighbors. Therefore, it takes $O(cs \log^2(s))$ time.

\end{itemize}

For concreteness of the discussion, we consider problems in which
$\phi_i(x)$ takes the form of $\ell(x^\top v_i,y_i)$, where $y_i$ is a
scalar and $v_i \in \reals^d$. This is the case in supervised learning
of linear predictors (e.g. logistic regression or ridge
regression). We further assume that the average number of non-zero
elements of $v_i$ is $\bar{d}$. In very large-scale problems, a single
machine cannot hold all of the data in its memory. However, we assume
that a single node can hold a fraction of $1/s$ of the data in its
memory.

Let us now discuss parallel implementations of the different
algorithms starting with deterministic gradient algorithms (such as
AGD). The bottleneck operation of deterministic gradient algorithms is
the calculation of the gradient. In the notation mentioned above, this
amounts to performing order of $n \bar{d}$ operations. If the data is
distributed over $s$ computing nodes, where each node holds $n/s$
examples, we can calculate the gradient in time $O(n\bar{d}/s + d
\log(s))$ as follows. First, each node calculates the gradient over its
own $n/s$ examples (which takes time $O(n\bar{d}/s)$). Then, the $s$
resulting vectors in $\reals^d$ are summed up in time $O(d\log(s))$.

Next, let us consider the SDCA algorithm.  On a single computing node,
it was observed that SDCA is much more efficient than deterministic
gradient descent methods, since each iteration of SDCA costs only
$\Theta(\bar{d})$ while each iteration of AGD costs
$\Theta(n\bar{d})$. When we have $s$ nodes, for the SDCA algorithm,
dividing the examples into $s$ computing nodes does not yield any
speed-up. However, we can divide the \emph{features} into the $s$
nodes (that is, each node will hold $d/s$ of the features for all of
the examples). This enables the computation of $x^\top v_i$ in
(expected) time of $O(\bar{d}/s + s \log^2(s))$. Indeed, node $t$ will
calculate  $\sum_{j \in J_t} x_j v_{i,j}$, where $J_t$ is the set of
features stored in node $t$ (namely, $|J_t|=d/s$). Then, each node broadcasts the
resulting scalar to all the other nodes. Note that we will obtain a
speed-up over the naive implementation only if $s \log^2(s) \ll
\bar{d}$.

For the ASDCA algorithm, each iteration involves the computation of
the gradient over $m$ examples. We can choose to implement it by
dividing the examples to the $s$ nodes (as we did for AGD) or by dividing the features
into the $s$ nodes (as we did for SDCA). In the first case, the cost
of each iteration is $O(m\bar{d}/s + d\log(s))$ while in the latter
case, the cost of each iteration is $O(m\bar{d}/s + ms\log^2(s))$. We
will choose between these two implementations based on the relation
between $d,m,$ and $s$. 

The runtime and communication time of each iteration is summarized in
the table below.

\begin{center}
\begin{tabular}{l|c|c|c}  \hline
 Algorithm & partition type & runtime & communication time \\[0.2cm]
 \hline & & & \\[-0.2cm]
SDCA & features &  ${\bar{d}}/{s} $ &  $s\log^2(s)$  \\[0.2cm]
\textbf{ASDCA} & features  & ${\bar{d}m}/{s} $ & $ms\log^2(s)$  \\[0.2cm]
\textbf{ASDCA} & examples & ${\bar{d}m}/{s} $ & $d\log(s)$  \\[0.2cm]
AGD & examples & ${\bar{d}n}/{s}$ &  $d\log(s)$ \\[0.2cm] \hline
\end{tabular}
\end{center}

We again see that ASDCA nicely interpolates between SDCA and AGD.  In
practice, it is usually the case that there is a non-negligible cost
of opening communication channels between nodes. In that case, it will
be better to apply the ASDCA with a value of $m$ that reflects an
adequate tradeoff between the runtime of each node and the
communication time. With the appropriate value of $m$ (which depends
on constants like the cost of opening communication channels and
sending packets of bits between nodes), ASDCA may outperform both
SDCA and AGD.

\section{Experimental Results} \label{sec:experimental}

In this section we demonstrate how ASDCA interpolates between SDCA and
AGD. All of our experiments are performed for the task of binary
classification with a smooth variant of the hinge-loss (see
\cite{ShalevZh2013}). Specifically, let $(v_1,y_1),\ldots,(v_m,y_m)$
be a set of labeled examples, where for every $i$, $v_i \in \reals^d$
and $y_i \in \{\pm 1\}$. Define $\phi_i(x)$ to be
\[
\phi_i(x) = \begin{cases}
0  & y_i x^\top v_i > 1 \\
1/2-y_i x^\top v_i  & y_i x^\top v_i < 0 \\
\frac{1}{2}(1-y_i x^\top v_i)^2    & \textrm{o.w.}
\end{cases} 
\]
We also set the regularization function to be $g(x) =
\frac{\lambda}{2}\|x\|_2^2$ where $\lambda = 1/n$. This is the default
value for the regularization parameter taken in several optimization
packages.

Following \cite{ShalevZh2013}, the experiments were performed on three
large datasets with very different feature counts and sparsity. The
astro-ph dataset classifies abstracts of papers from the physics ArXiv
according to whether they belong in the astro-physics section; CCAT is
a classification task taken from the Reuters RCV1 collection; and cov1
is class 1 of the covertype dataset of Blackard, Jock \& Dean. The
following table provides details of the dataset characteristics.
\begin{center}
\begin{tabular}{|r|c|c|c|c|}
	\hline
Dataset & Training Size & Testing Size & Features & Sparsity  \\ \hline 
astro-ph & $29882$ & $32487$ & $99757$ & $0.08\%$ \\
CCAT & $781265$ & $23149$ & $47236$ & $0.16\%$ \\
cov1 & $522911$ & $58101$ & $54$ & $22.22\%$ \\
\hline
\end{tabular}
\end{center}

We ran ASDCA with values of $m$ from the set
$\{10^{-4}n,10^{-3}n,10{-2}n\}$. We also ran the SDCA algorithm and
the AGD algorithm. In \figref{fig:smooth} we depict the primal
sub-optimality of the different algorithms as a function of the number
of examples processed. Note that each iteration of SDCA processes a
single example, each iteration of ASDCA processes $m$ examples, and
each iteration of AGD processes $n$ examples. As can be seen from the
graphs, ASDCA indeed interpolates between SDCA and AGD. It is
clear from the graphs that SDCA is much better than AGD when we have a
single computing node. ASDCA performance is quite similar to SDCA when
$m$ is not very large. As discussed in \secref{sec:parallel}, when we
have parallel computing nodes and there is a non-negligible cost of
opening communication channels between nodes, running ASDCA with an
appropriate value of $m$ (which depends on constants like the cost of
opening communication channels) may yield the best performance.

\begin{figure}

\begin{center}
\begin{tabular}{ @{} S @{} S @{} S @{} }
\scriptsize{astro-ph} & \scriptsize{CCAT} & \scriptsize{cov1}\\ \hline
\includegraphics[width=0.31\textwidth]{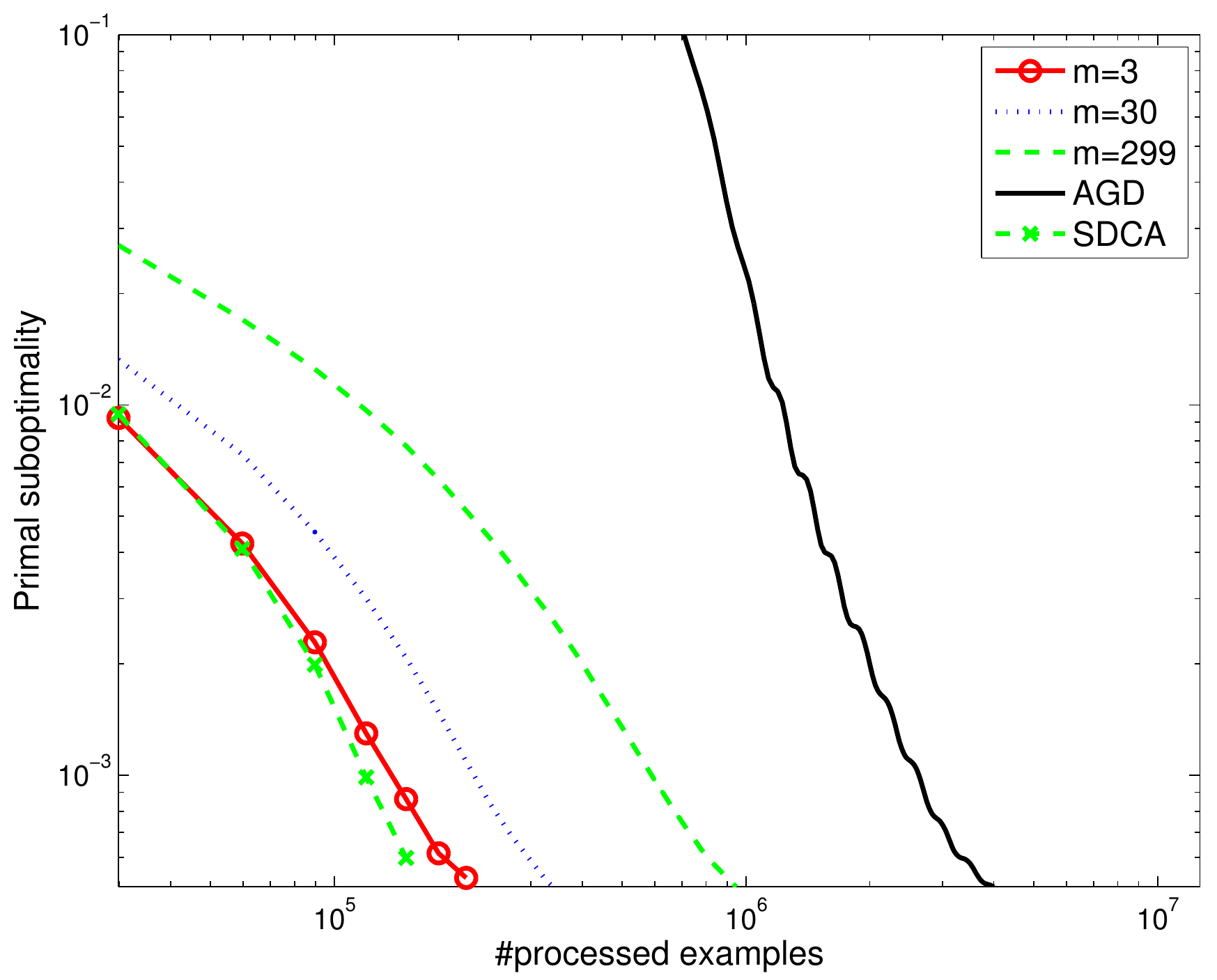} &
\includegraphics[width=0.31\textwidth]{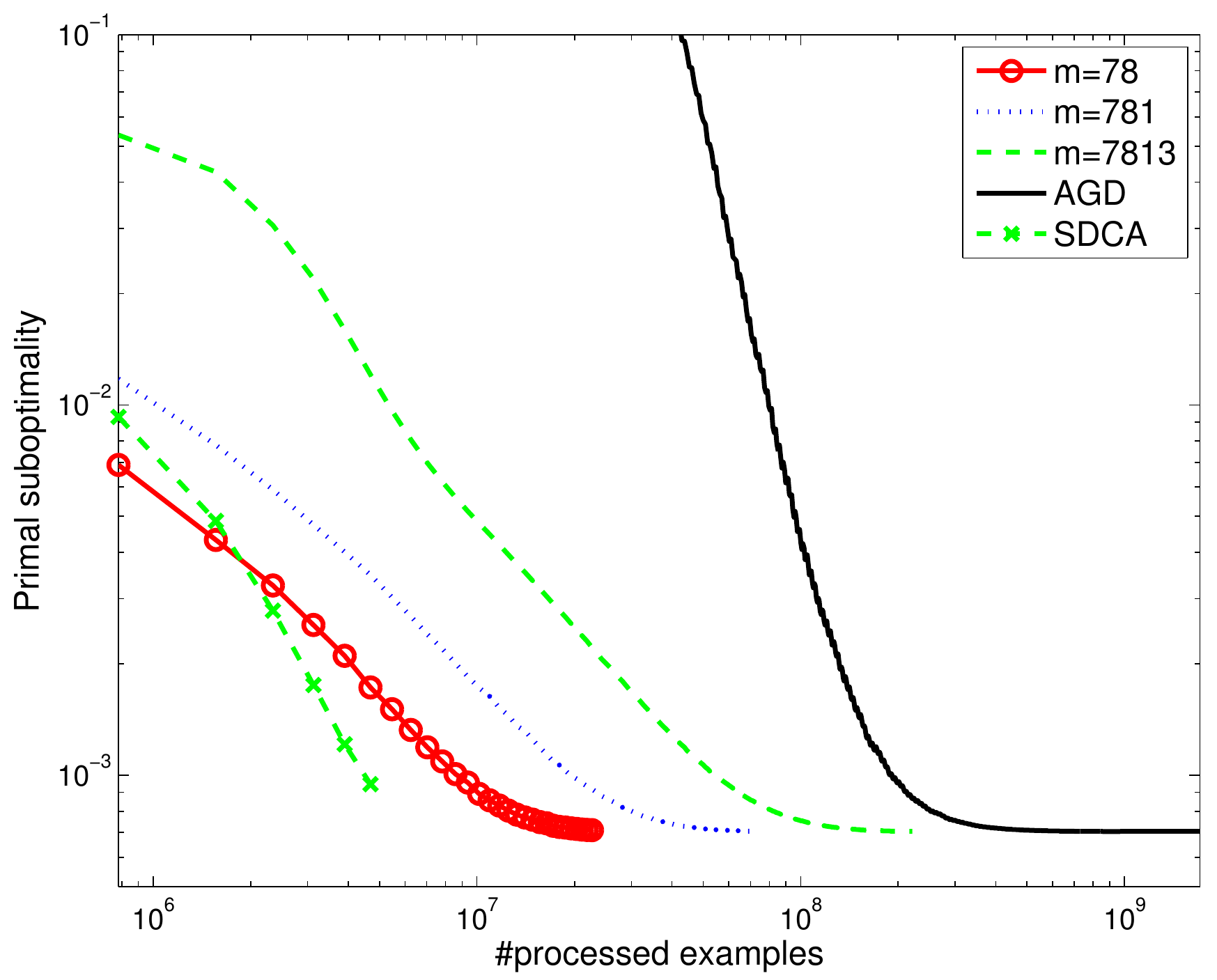}  &
\includegraphics[width=0.31\textwidth]{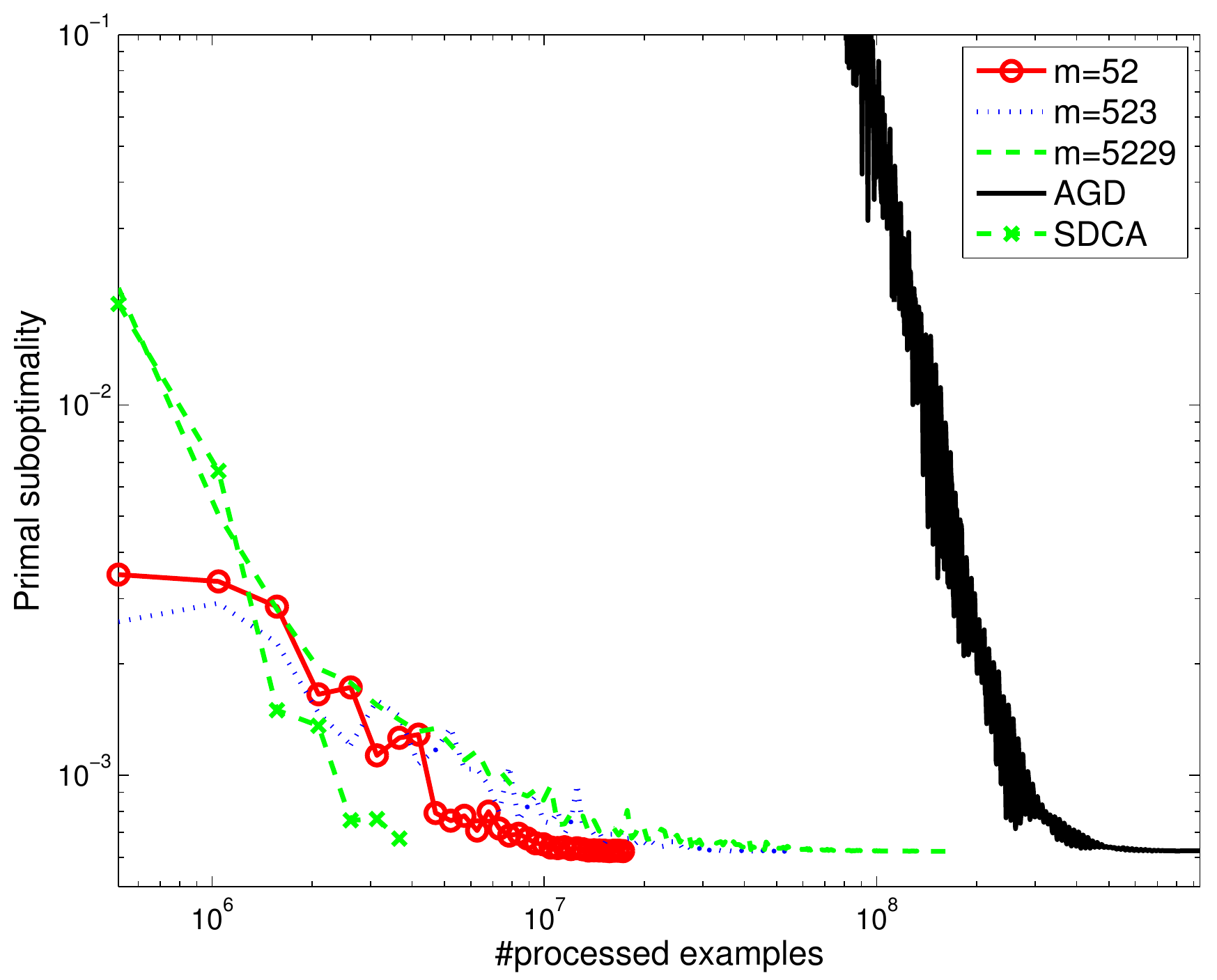}\\
\includegraphics[width=0.31\textwidth]{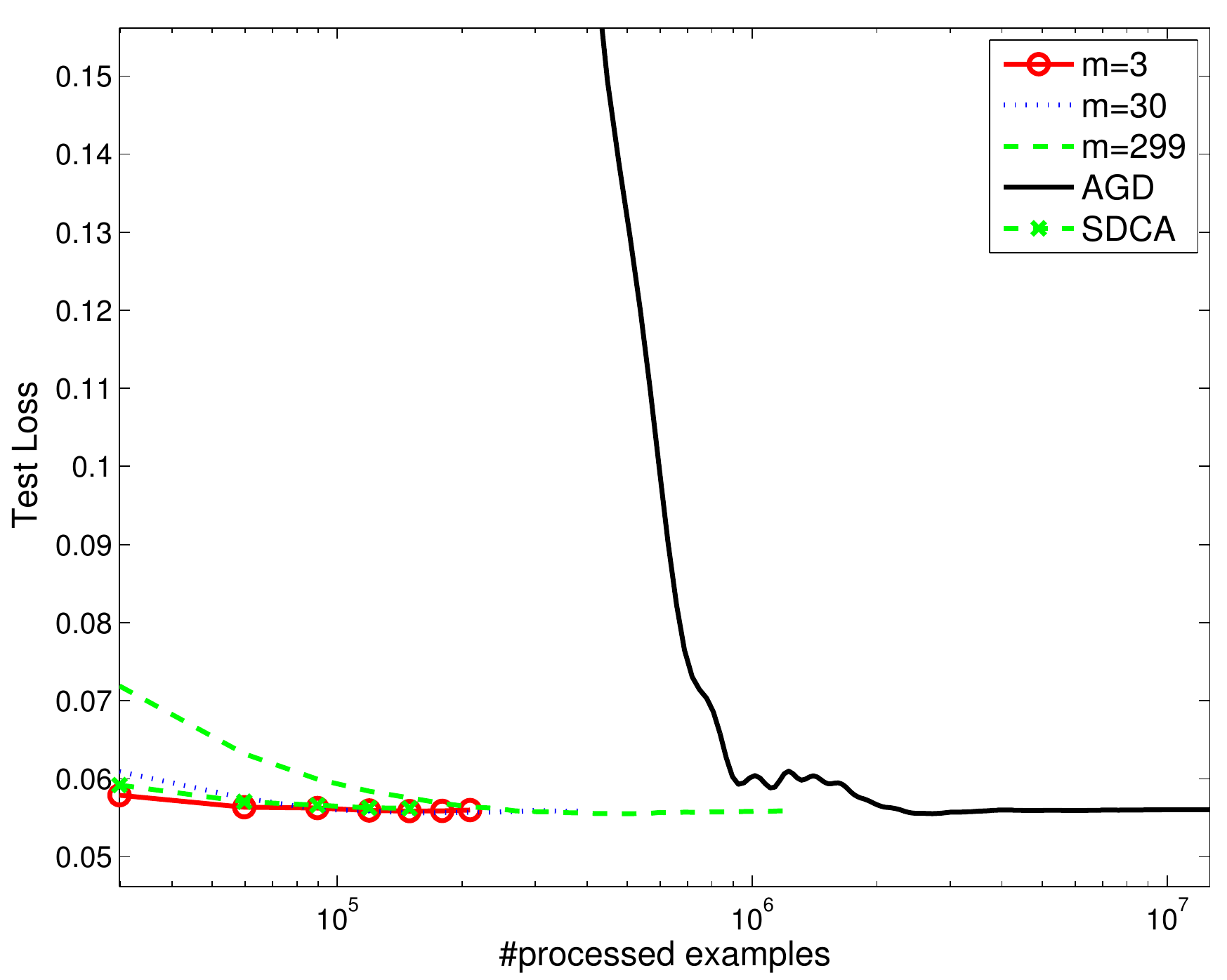} &
\includegraphics[width=0.31\textwidth]{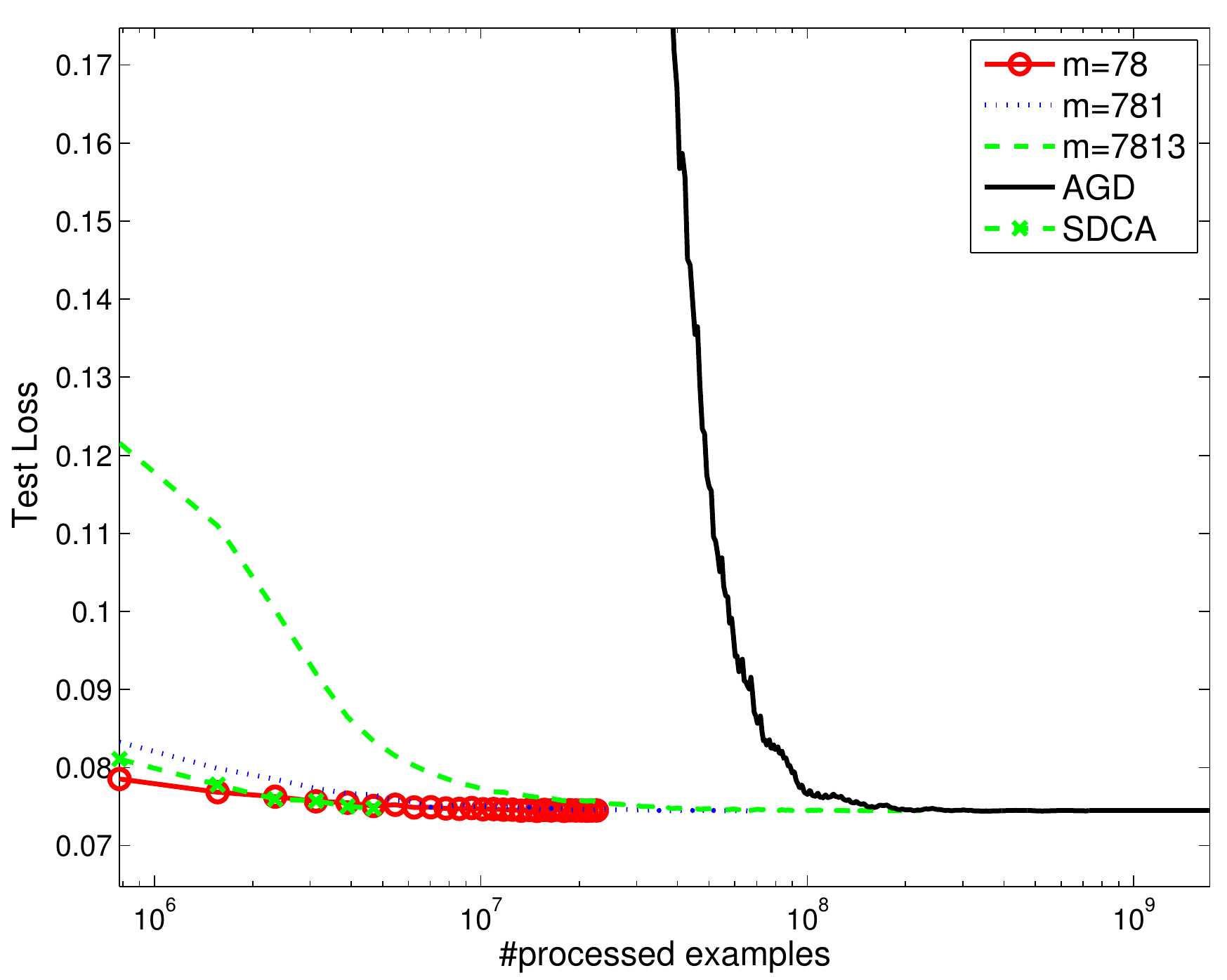}  &
\includegraphics[width=0.31\textwidth]{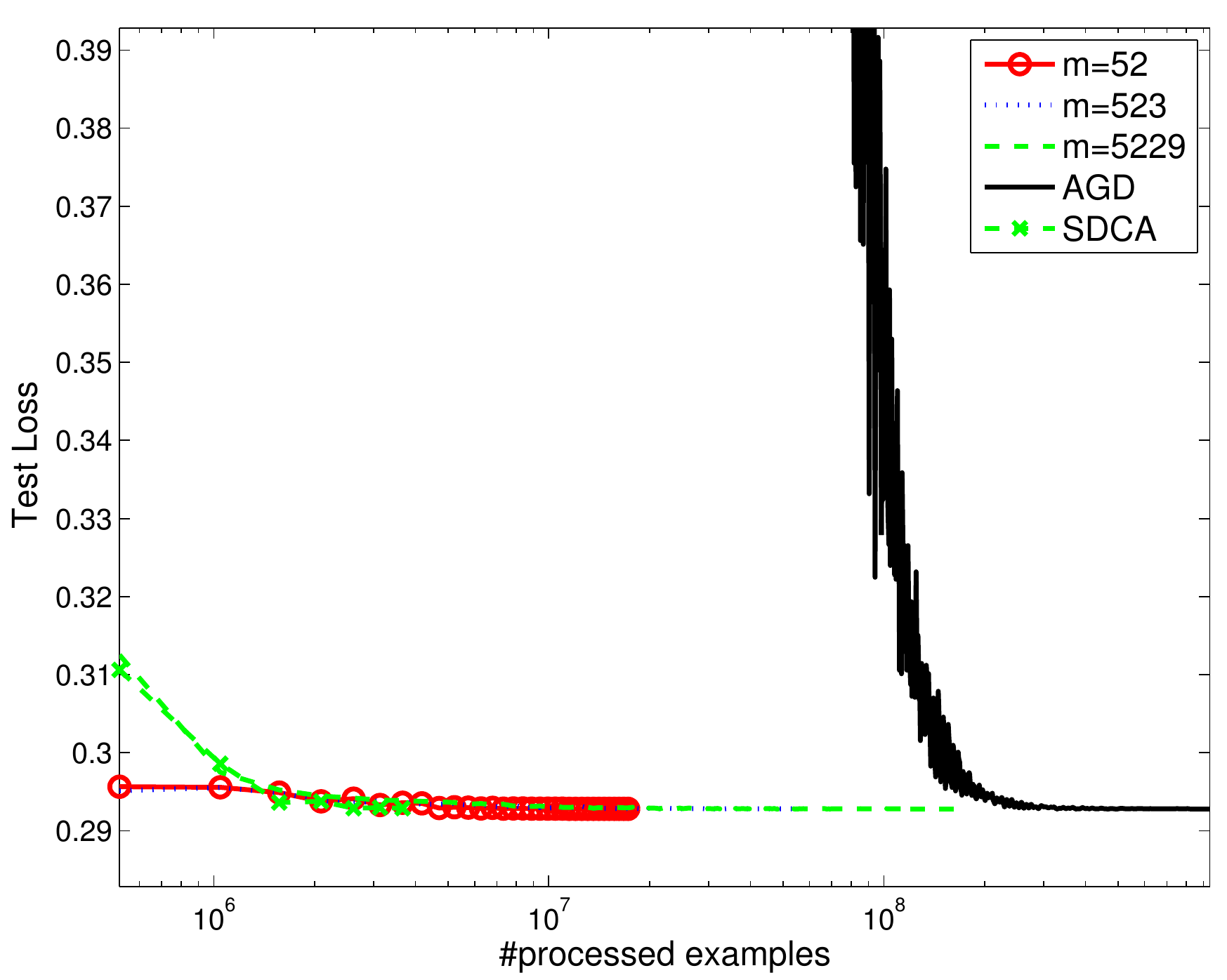}\\
\includegraphics[width=0.31\textwidth]{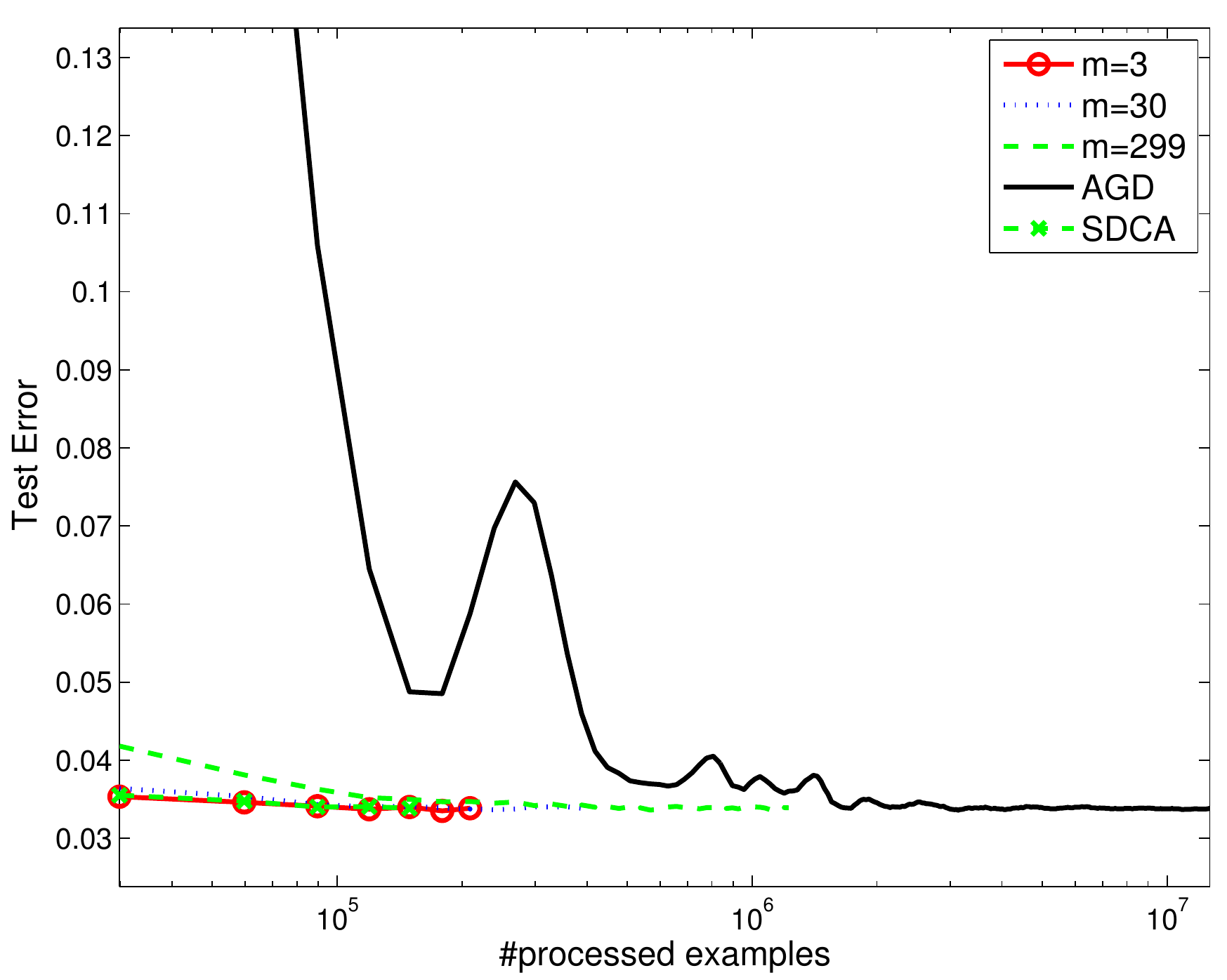} &
\includegraphics[width=0.31\textwidth]{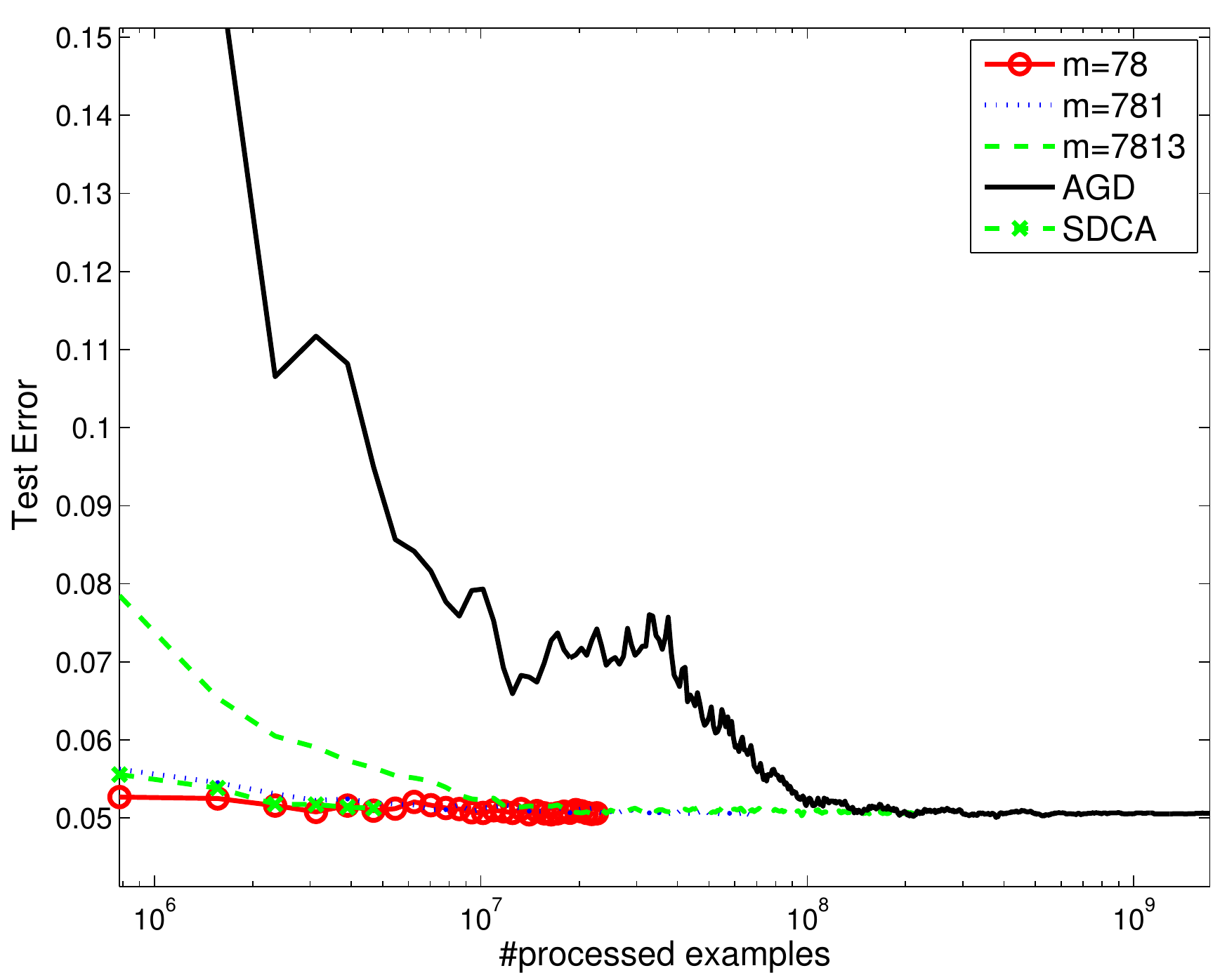}  &
\includegraphics[width=0.31\textwidth]{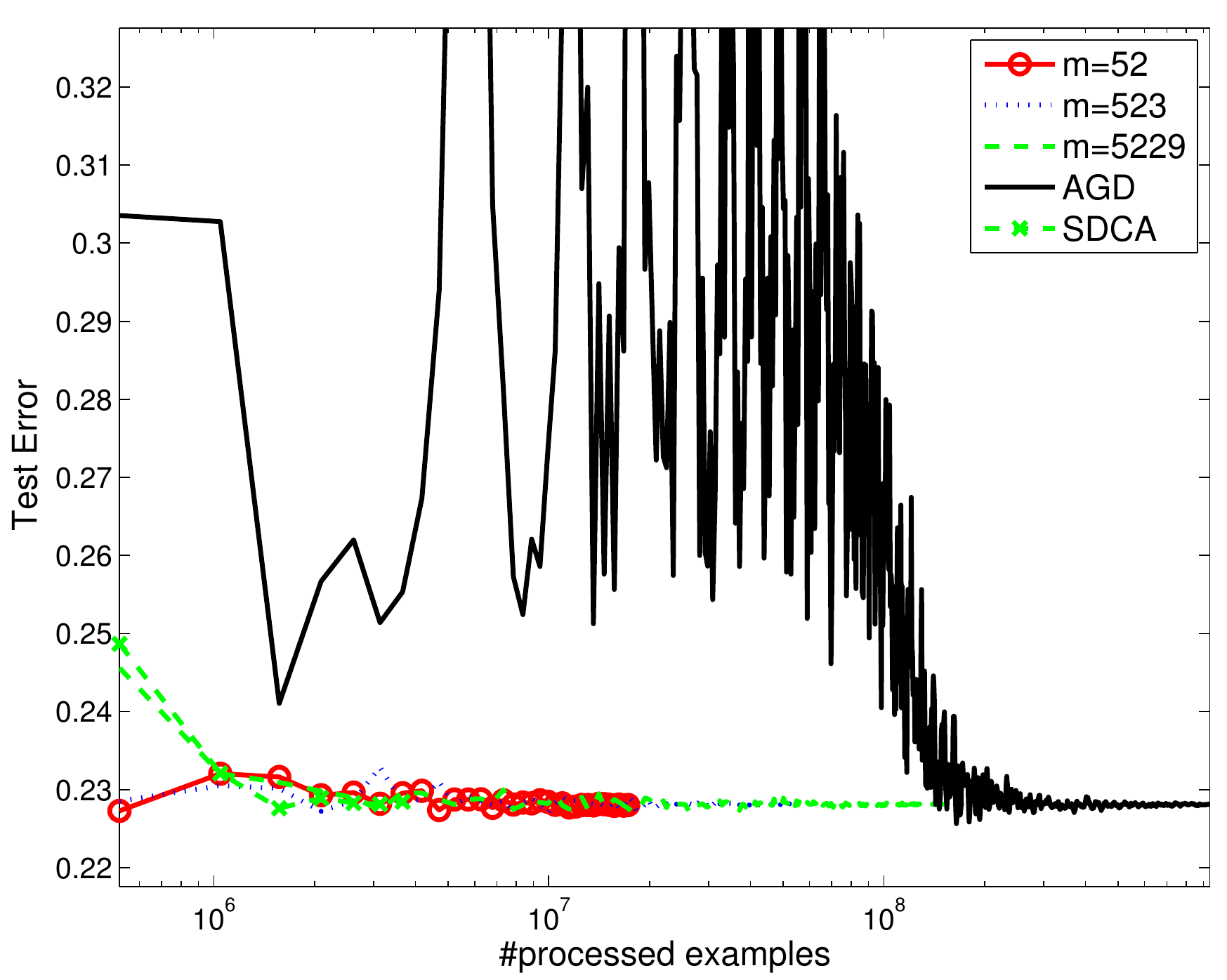}\\
\end{tabular}
\end{center}

\caption{\label{fig:smooth}
The figures presents the performance of AGD, SDCA, and ASDCA with different
values of mini-batch size, $m$. In all figures, the x axis is the
number of processed examples. The three columns are for the different
datasets. Top: primal sub-optimality. Middle: average value of the
smoothed hinge loss function over a test set. Bottom: average value of
the 0-1 loss over a test set.}

\end{figure}

\section{Proof} \label{sec:proof}

We use the following notation:
\begin{align*}
&f(x) = \frac{1}{n} \sum_{i=1}^n \phi_i(x) ~, \\
&\Delta \bar{\alpha}^{(t)} = \bar{\alpha}^{(t)} - \bar{\alpha}^{(t-1)} ~.
% ~,
% &\Delta D^{(t)} = D(\alpha^{(t)})-D(\alpha^{(t-1)}) ~,\\
% &\Delta P^{(t)} = P(x^{(t)})-P(x^{(t-1)}) ~.
\end{align*}
In addition, we use the notation $\tE$ to denote the expectation over
the choice of the set $I$ at iteration $t$, conditioned on the values
of $x^{(t-1)}$ and $\alpha^{(t-1)}$.

Our first lemma calculates the expected value of $\Delta \bar{\alpha}^{(t)}$.
\begin{lemma} \label{lem:DeltatE}
At each round $t$, we have
\[
\tE[\Delta \bar{\alpha}^{(t)}] = - \frac{\theta
  m}{n}\left( \bar{\alpha}^{(t-1)} + \nabla f(u^{(t-1)}) \right) ~.
\]
\end{lemma}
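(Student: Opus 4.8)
The plan is to unfold the definition of $\Delta\bar{\alpha}^{(t)}$, substitute the dual update rule for the coordinates in $I$, and then take the conditional expectation $\tE$ over the random draw of $I$ using the fact that each index lands in $I$ with probability $m/n$. First I would read off, directly from the update $\bar{\alpha}^{(t)} = \bar{\alpha}^{(t-1)} + n^{-1}\sum_{i\in I}(\alpha_i^{(t)}-\alpha_i^{(t-1)})$, that $\Delta\bar{\alpha}^{(t)} = n^{-1}\sum_{i\in I}(\alpha_i^{(t)}-\alpha_i^{(t-1)})$. For each $i\in I$, the rule $\alpha_i^{(t)} = (1-\theta)\alpha_i^{(t-1)} - \theta\nabla\phi_i(u^{(t-1)})$ gives $\alpha_i^{(t)}-\alpha_i^{(t-1)} = -\theta\bigl(\alpha_i^{(t-1)} + \nabla\phi_i(u^{(t-1)})\bigr)$, so that $\Delta\bar{\alpha}^{(t)} = -\frac{\theta}{n}\sum_{i\in I}\bigl(\alpha_i^{(t-1)} + \nabla\phi_i(u^{(t-1)})\bigr)$.

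Next, since $I$ is a uniformly random subset of $\{1,\ldots,n\}$ of size $m$, any fixed index satisfies $\prob[i\in I] = m/n$; hence for any collection of vectors $h_1,\ldots,h_n$ that are fixed given $x^{(t-1)}$ and $\alpha^{(t-1)}$ one has $\tE[\sum_{i\in I}h_i] = \frac{m}{n}\sum_{i=1}^n h_i$. Applying this with $h_i = \alpha_i^{(t-1)} + \nabla\phi_i(u^{(t-1)})$ (note that $u^{(t-1)}$ is determined by the conditioning, so each $h_i$ is indeed fixed) yields $\tE[\Delta\bar{\alpha}^{(t)}] = -\frac{\theta m}{n^2}\sum_{i=1}^n\bigl(\alpha_i^{(t-1)} + \nabla\phi_i(u^{(t-1)})\bigr)$.

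Finally I would simplify the two sums. The definition $f = \frac{1}{n}\sum_i\phi_i$ gives $\sum_{i=1}^n\nabla\phi_i(u^{(t-1)}) = n\,\nabla f(u^{(t-1)})$. For the other sum I would first establish, by induction on $t$ from the initialization $\bar{\alpha}^{(0)}=0$, $\alpha_i^{(0)}=0$, the invariant $\bar{\alpha}^{(t)} = n^{-1}\sum_{i=1}^n\alpha_i^{(t)}$: the $\bar{\alpha}$ update adds exactly $n^{-1}$ times the total change of the coordinates in $I$, while the coordinates outside $I$ are unchanged, so the equality propagates. This gives $\sum_{i=1}^n\alpha_i^{(t-1)} = n\,\bar{\alpha}^{(t-1)}$, and substituting both identities collapses the expression to $\tE[\Delta\bar{\alpha}^{(t)}] = -\frac{\theta m}{n}\bigl(\bar{\alpha}^{(t-1)} + \nabla f(u^{(t-1)})\bigr)$, as claimed.

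This is an essentially mechanical computation rather than a hard argument; the only places that call for a moment's care are verifying the invariant $\bar{\alpha}^{(t)} = n^{-1}\sum_i\alpha_i^{(t)}$, which is precisely what lets the $\bar{\alpha}^{(t-1)}$ term emerge cleanly in place of the raw sum of dual coordinates, and making sure the sampling identity $\prob[i\in I]=m/n$ is applied to convert a sum over the random set $I$ into $\frac{m}{n}$ times the sum over the full index set.
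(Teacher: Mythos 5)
Your proof is correct and takes essentially the same route as the paper's: unfold the update to get $\Delta\bar{\alpha}^{(t)} = -\tfrac{\theta}{n}\sum_{i\in I}\bigl(\alpha_i^{(t-1)}+\nabla\phi_i(u^{(t-1)})\bigr)$, take the conditional expectation using $\tE[1[i\in I]]=m/n$, and identify the resulting full sums with $\bar{\alpha}^{(t-1)}$ and $\nabla f(u^{(t-1)})$. The only difference is that you explicitly verify by induction the invariant $\bar{\alpha}^{(t)} = n^{-1}\sum_{i=1}^n\alpha_i^{(t)}$, which the paper uses implicitly; this is a sound (and slightly more careful) presentation of the same argument.
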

\begin{proof}
By the definition of the update, 
\[
\Delta \bar{\alpha}^{(t)} = \frac{1}{n} \sum_{i \in I} (\alpha_i^{(t)}
  - \alpha_i^{(t-1)}) = \frac{-\theta}{n} \sum_{i \in I}
    (\alpha_i^{(t-1)} + \nabla \phi_i(u^{(t-1)})) =
\frac{-\theta}{n} \sum_{i=1}^n 1[i\in I]
    (\alpha_i^{(t-1)} + \nabla \phi_i(u^{(t-1)})) ~. 
\]
Taking expectation w.r.t. the choice of $I$ and noting that $\tE[ 1[i
\in I]] = m/n$ we obtain that
\[
\tE[ \Delta \bar{\alpha}^{(t)}] = \frac{-\theta m}{n^2} \sum_{i=1}^n 
    (\alpha_i^{(t-1)} + \nabla \phi_i(u^{(t-1)})) =  \frac{-\theta
      m}{n} \left( \bar{\alpha}^{(t-1)} + \nabla f(u^{(t-1)})\right)~. 
\]
\end{proof}

Next, we upper bound the ``variance'' of $\Delta \bar{\alpha}^{(t)}$,
in the sense of the expected squared norm of the difference between
$\Delta \bar{\alpha}^{(t)}$ and its expectation.
\begin{lemma} \label{lem:var}
At each round $t$, we have
\[
\tE \|\Delta \bar{\alpha}^{(t)}-\tE \Delta \bar{\alpha}^{(t)}\|^2 
\leq \frac{m \theta^2}{n^3} \sum_{i=1}^n \|\alpha_i^{(t-1)} + \nabla \phi_i (u^{(t-1)})\|^2 .
\]
\end{lemma}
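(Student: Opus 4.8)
The plan is to reuse the explicit formula for $\Delta\bar\alpha^{(t)}$ derived inside the proof of \lemref{lem:DeltatE}. Writing $a_i = \alpha_i^{(t-1)} + \nabla\phi_i(u^{(t-1)}) \in \reals^d$ for brevity, that proof already shows $\Delta\bar\alpha^{(t)} = -\tfrac{\theta}{n}\sum_{i=1}^n 1[i\in I]\,a_i$ and $\tE[\Delta\bar\alpha^{(t)}] = -\tfrac{\theta}{n}\cdot\tfrac{m}{n}\sum_{i=1}^n a_i$. Subtracting, the centered quantity is $\Delta\bar\alpha^{(t)}-\tE[\Delta\bar\alpha^{(t)}] = -\tfrac{\theta}{n}\sum_{i=1}^n \xi_i\,a_i$, where $\xi_i = 1[i\in I]-\tfrac{m}{n}$ is zero-mean. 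Hence the target equals $\tfrac{\theta^2}{n^2}\,\tE\|\sum_i \xi_i a_i\|^2$, which I expand into $\tfrac{\theta^2}{n^2}\sum_{i,j}\tE[\xi_i\xi_j]\,a_i^\top a_j$.

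The crux is the covariance structure of the without-replacement sample $I$. Because $1[i\in I]$ is an indicator, $\tE[1[i\in I]]=m/n$ gives the diagonal $\tE[\xi_i^2] = \tfrac{m}{n}(1-\tfrac{m}{n}) = \tfrac{m(n-m)}{n^2}$, while for $i\ne j$ the pair probability $\tE[1[i\in I]1[j\in I]] = \tfrac{m(m-1)}{n(n-1)}$ yields $\tE[\xi_i\xi_j] = \tfrac{m(m-1)}{n(n-1)}-\tfrac{m^2}{n^2} = -\tfrac{m(n-m)}{n^2(n-1)}$. The qualitative feature I want to exploit is that these off-diagonal covariances are \emph{negative}: without-replacement sampling induces negative correlation, so the cross terms will help rather than hurt.

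Concretely, I plan to split the double sum into its diagonal and off-diagonal parts and use the identity $\sum_{i\ne j}a_i^\top a_j = \|\sum_i a_i\|^2 - \sum_i\|a_i\|^2$. Substituting the two covariance values and collecting coefficients, the $\sum_i\|a_i\|^2$ contributions combine to $\tfrac{m(n-m)}{n(n-1)}$, and there remains a single term $-\tfrac{m(n-m)}{n^2(n-1)}\|\sum_i a_i\|^2$ which is nonpositive and can simply be discarded, giving the upper bound $\tE\|\sum_i\xi_i a_i\|^2 \le \tfrac{m(n-m)}{n(n-1)}\sum_i\|a_i\|^2$.

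Finally I would clean up the constant: since $m\ge 1$ we have $\tfrac{n-m}{n-1}\le 1$, so $\tfrac{m(n-m)}{n(n-1)} = \tfrac{m}{n}\cdot\tfrac{n-m}{n-1}\le \tfrac{m}{n}$. Multiplying back by $\tfrac{\theta^2}{n^2}$ produces exactly the claimed $\tfrac{m\theta^2}{n^3}\sum_i\|a_i\|^2$. The only real subtlety—and the step most likely to derail a careless computation—is getting the without-replacement covariance right and recognizing that the negative correlation lets me \emph{drop} the $\|\sum_i a_i\|^2$ term outright instead of having to bound it; everything after that is bookkeeping.
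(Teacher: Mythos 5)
Your proof is correct, and it executes the key step differently from the paper. The paper works with the centered update vectors: writing $\beta_i=-\theta\,a_i$ and $\mu=\tfrac1n\sum_i\beta_i$, it expands $\tE\|n^{-1}\sum_{i\in I}(\beta_i-\mu)\|^2$ into diagonal and off-diagonal parts over the random set $I$, and dismisses the cross terms with a purely qualitative claim — that vectors sampled without replacement from a zero-mean population are ``not positively correlated'' — before bounding the diagonal via $\sum_i\|\beta_i-\mu\|^2\le\sum_i\|\beta_i\|^2$. You instead center the \emph{indicators}, $\xi_i=1[i\in I]-m/n$, sum over all $i$ deterministically, and compute the covariance structure exactly: $\tE[\xi_i^2]=\tfrac{m(n-m)}{n^2}$ and $\tE[\xi_i\xi_j]=-\tfrac{m(n-m)}{n^2(n-1)}$ for $i\ne j$. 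This makes rigorous precisely the step the paper waves through, and it yields the exact identity
\begin{equation*}
\tE\Bigl\|\sum_i \xi_i a_i\Bigr\|^2=\frac{m(n-m)}{n(n-1)}\sum_i\|a_i\|^2-\frac{m(n-m)}{n^2(n-1)}\Bigl\|\sum_i a_i\Bigr\|^2,
\end{equation*}
from which the lemma follows by dropping the nonpositive term and using $\tfrac{n-m}{n-1}\le 1$. Your route is longer but self-contained and gives a strictly sharper constant, $\tfrac{m(n-m)}{n(n-1)}\le\tfrac{m}{n}$, which correctly vanishes at $m=n$ (full batch has zero variance), whereas the paper's bound does not; the paper's argument, in exchange, is shorter and never divides by $n-1$, so it does not even formally need $n\ge 2$ (for $n=1$ your off-diagonal formula is vacuous anyway, since no pairs $i\ne j$ exist). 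Both are valid proofs of the stated bound.
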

\begin{proof}
  We introduce the simplified notation 
  $\beta_i =-\theta(\alpha_{i}^{(t-1)} + \nabla \phi_i (u^{(t-1)}))$ and $\mu= \frac{n}{m} \tE \Delta \bar{\alpha}^{(t)}$.
  Note that $\beta_i$ is independent of the choice of $I$ (thus can be considered as a deterministic number).
  Then $\beta_i = \alpha_i^{(t)} - \alpha_{i}^{(t-1)}$ when $i \in I$, and $n^{-1} \sum_{i=1}^n \beta_i = \mu$.
  We thus have
\begin{align*}
  &\tE \|\Delta \bar{\alpha}^{(t)}-\tE \Delta \bar{\alpha}^{(t)}\|^2 
  = \tE \|n^{-1} \sum_{i \in I} (\beta_i -\mu)\|^2 \\
  =& n^{-2} \tE \sum_{i,j \in I} (\beta_i -\mu)^\top(\beta_j -\mu) \\
  =& n^{-2} \tE \sum_{i \in I} \|\beta_i -\mu\|^2 + n^{-2} \tE \sum_{i \neq j \in I} (\beta_i -\mu)^\top(\beta_j -\mu) .
\end{align*}
Note that for any $i \neq j \in I$: $\beta_i-\mu$ and $\beta_j-\mu$
can be regarded as zero-mean random vectors that are drawn uniformly
at random from the same distribution without replacement.  Therefore
they are not positively correlated when $i \neq j$. That is, we have
\[
  \tE \sum_{i \neq j \in I} (\beta_i -\mu)^\top (\beta_j -\mu) \leq 0 .
\]
Therefore
\begin{align*}
  \tE \|\Delta \bar{\alpha}^{(t)}-\tE \Delta \bar{\alpha}^{(t)}\|^2 
\leq&  \frac{1}{n^2} \tE \sum_{i \in I} \|\beta_i -\mu\|^2 
=  \frac{m}{n^{3}} \sum_{i=1}^n \|\beta_i-\mu\|^2 \\
\leq&  \frac{m}{n^{3}} \sum_{i=1}^n \|\beta_i\|^2 
= \frac{m \theta^2}{n^3} \sum_{i=1}^n \|\alpha_i^{(t-1)} + \nabla \phi_i (u^{(t-1)})\|^2 .
\end{align*}
\end{proof}

Recall that the theorem upper bounds the expected value of $m \Delta
P(x^{(t)}) + n \Delta D(\alpha^{(t)})$, which in turns upper bound the
duality gap at round $t$. The following lemma derives an upper bound
on this quantity that depends on the value of this quantity at the
previous iteration and three additional terms. We will later show
that the sum of the additional terms is negative in expectation.
The lemma uses standard algebraic manipulations as well as the
assumptions on $g$ and $\phi_i$.
\begin{lemma} \label{lem:technical}
 For each round $t$ we have
\begin{align*}
\left[m \Delta P(x^{(t)}) + n \Delta D(\alpha^{(t)}) \right] - \left(1-\frac{\theta m}{n}\right) \left[ m \Delta P(x^{(t-1)})+ n \Delta
  D(\alpha^{(t-1)})\right] \le a_I^{(t)} + b_I^{(t)} + c_I^{(t)} ~,
\end{align*}
where
\begin{align*}
  & a_I^{(t)} =
  \frac{m}{2 \gamma} \|x^{(t)} - u^{(t-1)}\|^2 -\frac{\theta (1-\theta)\gamma}{2} \sum_{i \in I} \|\alpha_i^{(t-1)}+\nabla  \phi_i (u^{(t-1)})\|^2 ~~,\\
  & b_I^{(t)} = \theta m f(u^{(t-1)}) + \frac{m \theta}{n}
  \sum_{i=1}^n \phi_i^*(-\alpha^{(t-1)}_i)
  + m \nabla f(u^{(t-1)})^\top (-\theta u^{(t-1)}) \\
  & \quad + \sum_{i \in I} \left[ -\theta \phi_i^*(-\alpha^{(t-1)}_i)+
    \theta [\nabla \phi_i(u^{(t-1)})^\top u^{(t-1)} -
    \phi_i(u^{(t-1)})]
  \right]  ~~, \\
  & c_I^{(t)} = m \nabla f(u^{(t-1)})^\top (x^{(t)} - (1-\theta)
  x^{(t-1)}) + \frac{n + m\theta}{2\lambda}\|\bar{\alpha}^{(t)}\|^2 -
  \frac{n - m\theta }{2\lambda} \|\bar{\alpha}^{(t-1)}\|^2 ~~.
\end{align*}
\end{lemma}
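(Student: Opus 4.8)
The plan is to bound the two weighted sub-optimalities separately and then reassemble them into $a_I^{(t)}+b_I^{(t)}+c_I^{(t)}$. Writing $m\Delta P(x^{(t)})+n\Delta D(\alpha^{(t)})=mP(x^{(t)})-nD(\alpha^{(t)})+(n-m)D(\alpha^*)$, I would produce an upper bound on $mP(x^{(t)})$ and a matching lower bound on $nD(\alpha^{(t)})$, both anchored at $u^{(t-1)}$ and at the previous iterate. The target decomposition tells me in advance where each piece must land: the two curvature terms go to $a_I^{(t)}$, the loss and conjugate values go to $b_I^{(t)}$, and the gradient inner products together with the $\tfrac{1}{2\lambda}$-weighted squared norms of $\bar\alpha$ go to $c_I^{(t)}$.

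For the primal side I would invoke the $(1/\gamma)$-smoothness of $f=\tfrac1n\sum_i\phi_i$ at $u^{(t-1)}$, namely $f(x^{(t)})\le f(u^{(t-1)})+\nabla f(u^{(t-1)})^\top(x^{(t)}-u^{(t-1)})+\tfrac{1}{2\gamma}\|x^{(t)}-u^{(t-1)}\|^2$; multiplied by $m$ this yields the $\tfrac{m}{2\gamma}\|x^{(t)}-u^{(t-1)}\|^2$ term of $a_I^{(t)}$, a gradient inner product destined for $c_I^{(t)}$, and the value $f(u^{(t-1)})$. Since $g$ is exactly quadratic and $x^{(t)}=(1-\theta)x^{(t-1)}+\theta\nabla g^*(\bar\alpha^{(t)})$, I would split $g(x^{(t)})$ with the exact identity $g((1-\theta)p+\theta q)=(1-\theta)g(p)+\theta g(q)-\tfrac{\lambda\theta(1-\theta)}{2}\|p-q\|^2$ and use $g(\nabla g^*(\bar\alpha^{(t)}))=g^*(\bar\alpha^{(t)})=\tfrac{1}{2\lambda}\|\bar\alpha^{(t)}\|^2$; dropping the nonnegative remainder produces the $\tfrac{m\theta}{2\lambda}\|\bar\alpha^{(t)}\|^2$ that combines with the dual term below into the $\tfrac{n+m\theta}{2\lambda}$ coefficient.

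For the dual side I would use that coordinates $j\notin I$ are frozen and that for $i\in I$ the update is the convex combination $-\alpha_i^{(t)}=(1-\theta)(-\alpha_i^{(t-1)})+\theta\nabla\phi_i(u^{(t-1)})$, so the $\gamma$-strong convexity of $\phi_i^*$ gives $\phi_i^*(-\alpha_i^{(t)})\le(1-\theta)\phi_i^*(-\alpha_i^{(t-1)})+\theta\phi_i^*(\nabla\phi_i(u^{(t-1)}))-\tfrac{\theta(1-\theta)\gamma}{2}\|\alpha_i^{(t-1)}+\nabla\phi_i(u^{(t-1)})\|^2$. The final term is exactly the negative curvature term of $a_I^{(t)}$, and the Fenchel--Young identity $\phi_i^*(\nabla\phi_i(u^{(t-1)}))=\nabla\phi_i(u^{(t-1)})^\top u^{(t-1)}-\phi_i(u^{(t-1)})$ converts the remaining pieces into the $\sum_{i\in I}$ contribution of $b_I^{(t)}$, while $g^*(\bar\alpha^{(t)})=\tfrac{1}{2\lambda}\|\bar\alpha^{(t)}\|^2$ feeds $c_I^{(t)}$.

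The hard part is the bookkeeping after subtracting $(1-\theta m/n)[m\Delta P(x^{(t-1)})+n\Delta D(\alpha^{(t-1)})]$, and two points are genuinely delicate. First, every gradient inner product must collapse into the single term $m\nabla f(u^{(t-1)})^\top(x^{(t)}-(1-\theta)x^{(t-1)})$ of $c_I^{(t)}$ after substituting $u^{(t-1)}=(1-\theta)x^{(t-1)}+\theta\nabla g^*(\bar\alpha^{(t-1)})$, and the function values must regroup so that the $f(u^{(t-1)})$ coefficient becomes $\theta m=\tfrac{\theta m}{n}\cdot m+\tfrac{\theta m}{n}(n-m)$ and the conjugate terms acquire the stated $\tfrac{m\theta}{n}\sum_i\phi_i^*$ weight; this forces me to also apply convexity of $f$ to the $f(x^{(t-1)})$ term coming from $-(1-\theta m/n)\,mP(x^{(t-1)})$. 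Second, and most importantly, the optimal-value constant $(n-m)D(\alpha^*)$ carried by the definitions of $\Delta P,\Delta D$ must disappear, since $a_I^{(t)},b_I^{(t)},c_I^{(t)}$ are $\alpha^*$-free; I expect this to follow from optimality of $x^*$, i.e. $D(\alpha^*)=P(x^*)=\min_x P(x)\le P(u^{(t-1)})$, so that the constant is absorbed with nonnegative slack into the iterate-dependent terms. I would finally sanity-check the whole regrouping at the fixed point $x^{(t)}=x^{(t-1)}=x^*$, $\alpha^{(t)}=\alpha^{(t-1)}=\alpha^*$, where $u^{(t-1)}=x^*$, $\bar\alpha^*=\lambda x^*$, and $\nabla\phi_i(x^*)=-\alpha_i^*$, so that both sides vanish and every curvature term drops out.
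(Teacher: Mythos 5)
Your plan follows the paper's proof in all essentials: smoothness of $f$ at $u^{(t-1)}$ plus convexity of $f$ to move the leftover gradient term onto $f(x^{(t-1)})$, the quadratic identity for $g$ with the nonnegative remainder dropped (the paper phrases this as convexity of $\|\cdot\|^2$, which is the same thing), and $\gamma$-strong convexity of $\phi_i^*$ combined with Fenchel--Young on the dual side; these produce exactly the pieces of $a_I^{(t)}$, $b_I^{(t)}$, $c_I^{(t)}$ you assign them to, and your collection of the $\|\bar\alpha^{(t)}\|^2$, $\|\bar\alpha^{(t-1)}\|^2$ and $\phi_i^*$ coefficients is correct.

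The one step that would fail as written is the absorption of the constant $\tfrac{\theta m}{n}(n-m)D(\alpha^*)$. The convexity correction forces the \emph{full} weight $(1-\theta)m$ onto $f(x^{(t-1)})$ (there is nowhere else for the term $(1-\theta)m\,\nabla f(u^{(t-1)})^\top(x^{(t-1)}-u^{(t-1)})$ to go), so the coefficient $\theta m$ of $f(u^{(t-1)})$ in $b_I^{(t)}$ comes entirely from the primal side; your split $\theta m = \tfrac{\theta m}{n}m + \tfrac{\theta m}{n}(n-m)$ is not right. Consequently the iterate-dependent leftover after all bookkeeping is $(1-\theta)m\,P(x^{(t-1)}) - \bigl(1-\tfrac{\theta m}{n}\bigr)m\,P(x^{(t-1)}) = -\tfrac{\theta m}{n}(n-m)P(x^{(t-1)})$, and the constant must therefore be absorbed via $D(\alpha^*) \le P(x^{(t-1)})$ (weak duality at the \emph{previous iterate}), not via $D(\alpha^*) \le P(u^{(t-1)})$: with your anchor you would be left with $\tfrac{\theta m}{n}(n-m)\bigl[P(u^{(t-1)}) - P(x^{(t-1)})\bigr]$, whose sign is uncontrolled. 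This is a one-line fix, and it is exactly what the paper does, only up front and in a different order: the paper's first step replaces the coefficient $\bigl(1-\tfrac{\theta m}{n}\bigr)m$ of $\Delta P(x^{(t-1)})$ by the smaller $(1-\theta)m$, valid because $\Delta P(x^{(t-1)}) \ge 0$ and $m \le n$, after which the $D(\alpha^*)$ terms cancel identically; the residual $\tfrac{\theta m}{n}(n-m)\Delta P(x^{(t-1)})$ you must discard at the end is precisely the slack the paper discards at the beginning.
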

\begin{proof}
Since $m/n \le 1$ we have
\begin{align*}
&\left[m \Delta P(x^{(t)}) + n \Delta D(\alpha^{(t)}) \right] - \left(1-\frac{\theta m}{n}\right) \left[ m \Delta P(x^{(t-1)})+ n \Delta
  D(\alpha^{(t-1)})\right] \\
&\le \left[m \Delta P(x^{(t)}) + n \Delta D(\alpha^{(t)}) \right] -
\left[ (m-\theta m) \Delta P(x^{(t-1)}) + (n-\theta m) \Delta
  D(\alpha^{(t-1)})\right] \\
&=m[\Delta P(x^{(t)})-\Delta P(x^{(t-1)})] + n [\Delta
  D(\alpha^{(t)}) - \Delta D(\alpha^{(t-1)})]
  + \theta m[\Delta P(x^{(t-1)}) + \Delta D(\alpha^{(t-1)})] \\ 
&= m[P(x^{(t)})-P(x^{(t-1)})] - n [D(\alpha^{(t)}) - D(\alpha^{(t-1)}) ]
  + \theta m[P(x^{(t-1)}) - D(\alpha^{(t-1)})]  \\
&= m[P(x^{(t)})-(1-\theta)P(x^{(t-1)})] - n [D(\alpha^{(t)}) - D(\alpha^{(t-1)}) ]
  - \theta m D(\alpha^{(t-1)}) ~.
\end{align*}
Therefore, we need to show that the right-hand side of the above is
upper bounded by $a_I^{(t)} + b_I^{(t)} + c_I^{(t)}$.

\paragraph{Step 1:} We first bound $m[P(x^{(t)})-(1-\theta)P(x^{(t-1)})]$.
Using the smoothness of $f$ we have
\begin{align*}
f(x^{(t)})  &\le f(u^{(t-1)}) + \nabla f(u^{(t-1)})^\top (x^{(t)} - u^{(t-1)}) 
+ \frac{1}{2 \gamma} \|x^{(t)} - u^{(t-1)}\|^2 \\
&= (1-\theta)f(u^{(t-1)}) + \theta f(u^{(t-1)}) + \nabla
f(u^{(t-1)})^\top (x^{(t)} - u^{(t-1)}) + \frac{1}{2 \gamma} \|x^{(t)}
- u^{(t-1)}\|^2 
\end{align*}
and using the convexity of $f$ we also have
\[
 (1-\theta)f(u^{(t-1)}) \le 
(1-\theta)\left[f(x^{(t-1)}) - \nabla
  f(u^{(t-1)})^\top(x^{(t-1)}-u^{(t-1)})\right]  ~.
\]
Combining the above two inequalities and rearranging terms we obtain
\begin{align} \label{eqn:fxtb1}
f(x^{(t)})  &\le (1-\theta)f(x^{(t-1)}) + \theta f(u^{(t-1)}) \\ \nonumber
&+ \nabla
f(u^{(t-1)})^\top \left( x^{(t)} - \theta u^{(t-1)} - (1-\theta)x^{(t-1)}\right) + \frac{1}{2 \gamma} \|x^{(t)}
- u^{(t-1)}\|^2 ~.
\end{align}
Next, using the convexity of $g$ we have
\[
g(x^{(t)}) = \frac{\lambda}{2} \| x^{(t)}\|^2 = 
\frac{\lambda}{2} \left\| (1-\theta)
x^{(t-1)} + \frac{\theta}{\lambda} \bar{\alpha}^{(t)}\right\|^2 \le
\frac{\lambda(1-\theta)}{2} \|x^{(t-1)}\|^2 + \frac{\theta}{2\lambda}
\|\bar{\alpha}^{(t)}\|^2 ~.
\]
Combining this with \eqref{eqn:fxtb1} we obtain
\begin{align*}
 P(x^{(t)}) &= f(x^{(t)}) + g(x^{(t)}) \\
&\le (1-\theta)\left( f(x^{(t-1)}) + \tfrac{\lambda}{2}
  \|x^{(t-1)}\|^2\right) +  \theta f(u^{(t-1)}) \\ \nonumber
&+ \nabla
f(u^{(t-1)})^\top \left( x^{(t)} - \theta u^{(t-1)} - (1-\theta)x^{(t-1)}\right) + \frac{1}{2 \gamma} \|x^{(t)}
- u^{(t-1)}\|^2 
+ \frac{\theta}{2\lambda}
\|\bar{\alpha}^{(t)}\|^2 ~,
\end{align*}
which yields
\begin{align} \nonumber
m\left[ P(x^{(t)})-(1-\theta) P(x^{(t-1)})\right] &\le m\theta
f(u^{(t-1)}) + m\,\nabla
f(u^{(t-1)})^\top \left( x^{(t)} - \theta u^{(t-1)} - (1-\theta)x^{(t-1)}\right) \\ \label{eqn:PmPb2}
&+ \frac{m}{2 \gamma} \|x^{(t)}
- u^{(t-1)}\|^2 
+ \frac{\theta m}{2\lambda}
\|\bar{\alpha}^{(t)}\|^2 ~.
\end{align}

\paragraph{Step 2:} Next, we bound $- n [D(\alpha^{(t)}) - D(\alpha^{(t-1)}) ] $.
Using the definition of the dual update we have
\begin{equation} \label{eqn:duinterm1}
- n [D(\alpha^{(t)}) - D(\alpha^{(t-1)}) ] 
= \sum_{i \in I} \left[\phi_i^*(-\alpha^{(t)}_i) -\phi_i^*(-\alpha^{(t-1)}_i)\right] + 
\frac{n}{2\lambda} \left[\|\bar{\alpha}^{(t)}\|^2 -
  \|\bar{\alpha}^{(t-1)}\|^2\right] ~.
\end{equation}
For all $i \in I$, we may use the definition of the update of $\alpha^{(t)}_i$ in the algorithm, 
the strong-convexity of $\phi^*_i$, and the equality in Fenchel-Young for gradients to obtain:
\begin{align*}
&\phi_i^*(-\alpha^{(t)}_i) \le (1-\theta) \phi^*_i(-\alpha^{(t-1)}_i)
+\theta \phi_i^*(\nabla \phi_i(u^{(t-1)})) - \frac{\theta
  (1-\theta)\gamma}{2} \|\alpha^{(t-1)}_i+\nabla \phi_i(u^{(t-1)})\|^2
\\
&=  (1-\theta) \phi^*_i(-\alpha^{(t-1)}_i)
+\theta [\nabla \phi_i(u^{(t-1)})^\top u^{(t-1)} - \phi_i(u^{(t-1)})] - \frac{\theta
  (1-\theta)\gamma}{2} \|\alpha^{(t-1)}_i+\nabla \phi_i(u^{(t-1)})\|^2~.
\end{align*}
Combining this with \eqref{eqn:duinterm1} we get
\begin{align} \label{eqn:dDstep2}
&- n [D(\alpha^{(t)}) - D(\alpha^{(t-1)}) ] \le \\ \nonumber
&\sum_{i \in I} \left[ 
-\theta \phi^*_i(-\alpha^{(t-1)}_i)
+\theta [\nabla \phi_i(u^{(t-1)})^\top u^{(t-1)} - \phi_i(u^{(t-1)})] - \frac{\theta
  (1-\theta)\gamma}{2} \|\alpha^{(t-1)}_i+\nabla \phi_i(u^{(t-1)})\|^2
\right]\\ \nonumber
& + 
\frac{n}{2\lambda} \left[\|\bar{\alpha}^{(t)}\|^2 -
  \|\bar{\alpha}^{(t-1)}\|^2\right] ~.
\end{align}

\paragraph{Step 3:} 
Summing \eqref{eqn:dDstep2}, \eqref{eqn:PmPb2}, and the equation 
\[
-m\theta
D(\alpha^{(t-1)}) = \frac{m\theta}{n} \sum_i
\phi^*_i(-\alpha_i^{(t-1)}) + \frac{m\theta}{2\lambda}
\|\bar{\alpha}^{(t-1)}\|^2
\]
we obtain that
\begin{align*}
&m[P(x^{(t)})-(1-\theta)P(x^{(t-1)})] - n [D(\alpha^{(t)}) - D(\alpha^{(t-1)}) ]
  - \theta m D(\alpha^{(t-1)}) \le \\
& m\theta
f(u^{(t-1)}) + m\,\nabla
f(u^{(t-1)})^\top \left( x^{(t)} - \theta u^{(t-1)} - (1-\theta)x^{(t-1)}\right) \\ 
&+ \frac{m}{2 \gamma} \|x^{(t)}
- u^{(t-1)}\|^2 
+ \frac{\theta m}{2\lambda}
\|\bar{\alpha}^{(t)}\|^2 \\
&+\sum_{i \in I} \left[ 
-\theta \phi^*_i(-\alpha^{(t-1)}_i)
+\theta [\nabla \phi_i(u^{(t-1)})^\top u^{(t-1)} - \phi_i(u^{(t-1)})] - \frac{\theta
  (1-\theta)\gamma}{2} \|\alpha^{(t-1)}_i+\nabla \phi_i(u^{(t-1)})\|^2
\right]\\
& + 
\frac{n}{2\lambda} \left[\|\bar{\alpha}^{(t)}\|^2 -
  \|\bar{\alpha}^{(t-1)}\|^2\right] \\
&+\frac{m\theta}{n} \sum_i
\phi^*_i(-\alpha_i^{(t-1)}) + \frac{m\theta}{2\lambda}
\|\bar{\alpha}^{(t-1)}\|^2 \\
&= a_I^{(t)} + b_I^{(t)} + c_I^{(t)} ~.
\end{align*}

\end{proof}

\begin{lemma} \label{lem:negExpectation}
  At each round $t$, let $a_I^{(t)},b_I^{(t)},c_I^{(t)}$ be as defined
  in \lemref{lem:technical}. Then, 
\[
\tE[a_I^{(t)}+b_I^{(t)}+c_I^{(t)}] \le 0 ~.
\]
\end{lemma}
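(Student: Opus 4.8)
The plan is to take the conditional expectation $\tE$ (over the random set $I$) of each of $a_I^{(t)}$, $b_I^{(t)}$, $c_I^{(t)}$ separately, invoking \lemref{lem:DeltatE} for $\tE[\Delta\bar{\alpha}^{(t)}]$ and \lemref{lem:var} for its variance, and to organize the result into two manifestly non-positive pieces: a ``mean'' quadratic in the vector $\bar{\alpha}^{(t-1)}+\nabla f(u^{(t-1)})$, and a ``variance'' multiple of $S:=\sum_{i=1}^n\|\alpha_i^{(t-1)}+\nabla\phi_i(u^{(t-1)})\|^2$. Each of the two coefficients will be shown to be $\le 0$ under the constraints on $\theta$ in \eqref{eqn:thetadef}.

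First I would dispose of $b_I^{(t)}$. Only its final sum over $i\in I$ is random, and since $\tE[1[i\in I]]=m/n$ this sum becomes $\tfrac{m}{n}\sum_{i=1}^n[\cdots]$. Its $\phi_i^*$ part then cancels the standalone term $\tfrac{m\theta}{n}\sum_i\phi_i^*(-\alpha_i^{(t-1)})$, and using $\tfrac1n\sum_i\nabla\phi_i(u^{(t-1)})=\nabla f(u^{(t-1)})$ together with $\tfrac1n\sum_i\phi_i(u^{(t-1)})=f(u^{(t-1)})$, the remainder collapses to $\theta m f(u^{(t-1)})-\theta m\nabla f(u^{(t-1)})^\top u^{(t-1)}+\theta m(\nabla f(u^{(t-1)})^\top u^{(t-1)}-f(u^{(t-1)}))=0$, so $\tE[b_I^{(t)}]=0$. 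Next I would rewrite $a_I^{(t)}$ and $c_I^{(t)}$ using the update identities $x^{(t)}-u^{(t-1)}=\tfrac{\theta}{\lambda}\Delta\bar{\alpha}^{(t)}$ and $x^{(t)}-(1-\theta)x^{(t-1)}=\tfrac{\theta}{\lambda}\bar{\alpha}^{(t)}$, which follow from $\nabla g^*(\alpha)=\alpha/\lambda$ and the definitions of $u^{(t-1)},x^{(t)}$. This turns the leading term of $a_I^{(t)}$ into $\tfrac{m\theta^2}{2\gamma\lambda^2}\|\Delta\bar{\alpha}^{(t)}\|^2$ and expresses $c_I^{(t)}$ purely through $\bar{\alpha}^{(t-1)}$, $\Delta\bar{\alpha}^{(t)}$ and $\nabla f(u^{(t-1)})$.

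Writing $\bar{\alpha}^{(t)}=\bar{\alpha}^{(t-1)}+\Delta\bar{\alpha}^{(t)}$, expanding the norms in $c_I^{(t)}$, taking $\tE$, and substituting $\tE[\Delta\bar{\alpha}^{(t)}]=-\tfrac{\theta m}{n}(\bar{\alpha}^{(t-1)}+\nabla f(u^{(t-1)}))$, I expect the deterministic pieces together with the terms linear in $\Delta\bar{\alpha}^{(t)}$ to collapse to $-\tfrac{m^2\theta^2}{\lambda n}\|\bar{\alpha}^{(t-1)}+\nabla f(u^{(t-1)})\|^2$. The only remaining random object is $\tE\|\Delta\bar{\alpha}^{(t)}\|^2$, which enters with the positive coefficient $\tfrac{m\theta^2}{2\gamma\lambda^2}+\tfrac{n+m\theta}{2\lambda}$; I would split it by the bias--variance identity $\tE\|\Delta\bar{\alpha}^{(t)}\|^2=\|\tE\Delta\bar{\alpha}^{(t)}\|^2+\tE\|\Delta\bar{\alpha}^{(t)}-\tE\Delta\bar{\alpha}^{(t)}\|^2$, route the first summand (equal to $\tfrac{\theta^2m^2}{n^2}\|\bar{\alpha}^{(t-1)}+\nabla f(u^{(t-1)})\|^2$) into the mean quadratic, and bound the second by $\tfrac{m\theta^2}{n^3}S$ via \lemref{lem:var}. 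The strong-convexity term of $a_I^{(t)}$ contributes $-\tfrac{\theta(1-\theta)\gamma m}{2n}S$ in expectation, which joins the variance piece.

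After this grouping, $\tE[a_I^{(t)}+b_I^{(t)}+c_I^{(t)}]$ is bounded by a multiple of $\|\bar{\alpha}^{(t-1)}+\nabla f(u^{(t-1)})\|^2$ plus a multiple of $S$. The mean coefficient is non-positive exactly when $\tfrac{m\theta^2}{\gamma\lambda}+n+m\theta\le\tfrac{2n}{\theta^2}$, and the variance coefficient is non-positive exactly when $\bigl(\tfrac{m\theta^2}{\gamma\lambda}+n+m\theta\bigr)\theta\le(1-\theta)\gamma\lambda n^2$. The main obstacle is verifying these two scalar inequalities from the four-way minimum defining $\theta$ in \eqref{eqn:thetadef}: the summands $\tfrac{m\theta^2}{\gamma\lambda}$, $n$, and $m\theta$ must each be tamed by the appropriate bound among $\theta\le\tfrac14$, $\theta\le\tfrac14\sqrt{\gamma\lambda n/m}$, $\theta\le\tfrac14\gamma\lambda n$, and $\theta\le\tfrac14(\gamma\lambda n)^{2/3}/m^{1/3}$, sometimes combining two of them (e.g.\ multiplying $\theta^2\le\tfrac1{16}$ by $\theta^2\le\tfrac1{16}\gamma\lambda n/m$ to control $\tfrac{m\theta^4}{\gamma\lambda}$), with $m\le n$ used to absorb the lower-order terms. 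This regime-dependent bookkeeping — matching each bound to the term it controls — is precisely why the minimum carries four arguments, and it is where the care lies.
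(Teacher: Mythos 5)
Your plan follows the paper's proof essentially step for step: $\tE[b_I^{(t)}]=0$ via the same $\tE[1[i\in I]]=m/n$ cancellation; $a_I^{(t)}$ and $c_I^{(t)}$ rewritten through the identities $x^{(t)}-u^{(t-1)}=\tfrac{\theta}{\lambda}\Delta\bar{\alpha}^{(t)}$ and $x^{(t)}-(1-\theta)x^{(t-1)}=\tfrac{\theta}{\lambda}\bar{\alpha}^{(t)}$; a bias--variance split of $\tE\|\Delta\bar{\alpha}^{(t)}\|^2$ combined with \lemref{lem:DeltatE} and \lemref{lem:var}; and a reduction to two scalar inequalities in $\theta$ checked against \eqref{eqn:thetadef}. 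Your parametrization by $\|\bar{\alpha}^{(t-1)}+\nabla f(u^{(t-1)})\|^2$ and $S$ differs from the paper's $\|\tE\Delta\bar{\alpha}^{(t)}\|^2$ and $\tE\|\Delta\bar{\alpha}^{(t)}-\tE\Delta\bar{\alpha}^{(t)}\|^2$ only by the fixed factors $\tfrac{\theta^2m^2}{n^2}$ and (via \lemref{lem:var}) $\tfrac{m\theta^2}{n^3}$, so the two approaches produce the same conditions; your variance condition $\bigl(\tfrac{m\theta^2}{\gamma\lambda}+n+m\theta\bigr)\theta\le(1-\theta)\gamma\lambda n^2$ is exactly the paper's second condition.

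There is, however, one algebra slip that you must repair, because as written it would make the final verification insufficient. With your own collapse $-\tfrac{m^2\theta^2}{\lambda n}\|v\|^2$ (where $v=\bar{\alpha}^{(t-1)}+\nabla f(u^{(t-1)})$) and the routed mean term $\bigl(\tfrac{m\theta^2}{2\gamma\lambda^2}+\tfrac{n+m\theta}{2\lambda}\bigr)\tfrac{\theta^2m^2}{n^2}\|v\|^2$, the common factor $\tfrac{\theta^2m^2}{n^2}$ cancels on both sides, and the mean coefficient is non-positive exactly when $\tfrac{m\theta^2}{\gamma\lambda}+n+m\theta\le 2n$, \emph{not} when $\tfrac{m\theta^2}{\gamma\lambda}+n+m\theta\le\tfrac{2n}{\theta^2}$ as you wrote. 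Since $\theta<1$, your stated condition is strictly weaker than the true one, so verifying it (e.g.\ by the product trick you describe for $\tfrac{m\theta^4}{\gamma\lambda}$, a term that only arises from the erroneous $\tfrac{2n}{\theta^2}$) does not prove the lemma. The correct condition, equivalently $m\theta^2+m\theta\gamma\lambda-\gamma\lambda n\le 0$ (the paper's first condition), does follow directly from \eqref{eqn:thetadef}: $\theta\le\tfrac14\sqrt{\gamma\lambda n/m}$ gives $\tfrac{m\theta^2}{\gamma\lambda}\le n/16$, and $\theta\le\tfrac14$ with $m\le n$ gives $m\theta\le n/4$, so no product of two bounds is needed. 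With this correction your argument closes and coincides with the paper's.
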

\begin{proof}
Recall,
\begin{align*}
  & a_I^{(t)} =
  \frac{m}{2 \gamma} \|x^{(t)} - u^{(t-1)}\|^2 -\frac{\theta (1-\theta)\gamma}{2} \sum_{i \in I} \|\alpha_i^{(t-1)}+\nabla  \phi_i (u^{(t-1)})\|^2 ~~,\\
  & b_I^{(t)} = \theta m f(u^{(t-1)}) + \frac{m \theta}{n}
  \sum_{i=1}^n \phi_i^*(-\alpha^{(t-1)}_i)
  + m \nabla f(u^{(t-1)})^\top (-\theta u^{(t-1)}) \\
  & \quad + \sum_{i \in I} \left[ -\theta \phi_i^*(-\alpha^{(t-1)}_i)+
    \theta [\nabla \phi_i(u^{(t-1)})^\top u^{(t-1)} -
    \phi_i(u^{(t-1)})]
  \right]  ~~, \\
  & c_I^{(t)} = m \nabla f(u^{(t-1)})^\top (x^{(t)} - (1-\theta)
  x^{(t-1)}) + \frac{n + m\theta}{2\lambda}\|\bar{\alpha}^{(t)}\|^2 -
  \frac{n - m\theta }{2\lambda} \|\bar{\alpha}^{(t-1)}\|^2 ~~.
\end{align*}

\paragraph{Step 1:} We first show that $\tE[b_I^{(t)}] = 0$. Indeed,
\begin{align*}
&\tE\left[\sum_{i \in I} \left[ -\theta \phi_i^*(-\alpha^{(t-1)}_i)+
    \theta [\nabla \phi_i(u^{(t-1)})^\top u^{(t-1)} -
    \phi_i(u^{(t-1)})]
  \right] \right] \\
&= \tE\left[\sum_{i=1}^n 1[i \in I] \left[ -\theta \phi_i^*(-\alpha^{(t-1)}_i)+
    \theta [\nabla \phi_i(u^{(t-1)})^\top u^{(t-1)} -
    \phi_i(u^{(t-1)})]
  \right] \right] \\
&= \sum_{i=1}^n \left[ -\theta \phi_i^*(-\alpha^{(t-1)}_i)+
    \theta [\nabla \phi_i(u^{(t-1)})^\top u^{(t-1)} -
    \phi_i(u^{(t-1)})]
  \right] \tE\left[ 1[i \in I]\right] \\
&= \frac{m}{n} \sum_{i=1}^n \left[ -\theta \phi_i^*(-\alpha^{(t-1)}_i)+
    \theta [\nabla \phi_i(u^{(t-1)})^\top u^{(t-1)} -
    \phi_i(u^{(t-1)})]
  \right] \\
&= -\left[ \theta m f(u^{(t-1)}) + \frac{m \theta}{n}
  \sum_{i=1}^n \phi_i^*(-\alpha^{(t-1)}_i)
  + m \nabla f(u^{(t-1)})^\top (-\theta u^{(t-1)})  \right] ~.
\end{align*}

\paragraph{Step 2:} We next show that
\[
\tE[c_I^{(t)}] = \frac{n+m\theta}{2\lambda} \tE\left[ \|\Delta
  \bar{\alpha}^{(t)} - \tE \Delta
  \bar{\alpha}^{(t)} \|^2 \right] -  \frac{n-m\theta}{2\lambda}
\left\|\tE \Delta   \bar{\alpha}^{(t)} \right\|^2~.
\]
Indeed, using \lemref{lem:DeltatE} we know that
\[
\nabla f(u^{(t-1)}) = - \bar{\alpha}^{(t-1)} - \frac{n}{\theta\,m}
\tE[\Delta \bar{\alpha}^{(t)}] ~,
\]
and by the definition of the update, 
\[
x^{(t)} - (1-\theta) x^{(t-1)} = \frac{\theta}{\lambda}
\bar{\alpha}^{(t)} ~.
\]
Therefore,
\begin{align*}
  \tE c_I^{(t)} &= m \nabla f(u^{(t-1)})^\top \tE[x^{(t)} - (1-\theta)
  x^{(t-1)}] + \frac{n + m\theta}{2\lambda}
  \tE\|\bar{\alpha}^{(t)}\|^2 -
  \frac{n - m\theta }{2\lambda} \|\bar{\alpha}^{(t-1)}\|^2\\
  &= -m \left[ \bar{\alpha}^{(t-1)} + \frac{n}{\theta\,m} \tE[\Delta
    \bar{\alpha}^{(t)}]\right]^\top \left[\frac{\theta}{\lambda}
    \tE\bar{\alpha}^{(t)} \right] + \frac{n + m\theta}{2\lambda}
  \tE\|\bar{\alpha}^{(t)}\|^2 -
  \frac{n - m\theta }{2\lambda}  \|\bar{\alpha}^{(t-1)}\|^2\\
  &= -m \left[ \bar{\alpha}^{(t-1)} + \frac{n}{\theta\,m} 
    \tE[\Delta \bar{\alpha}^{(t)}]\right]^\top \left[\frac{\theta}{\lambda}
    [\tE \Delta \bar{\alpha}^{(t)}+ \bar{\alpha}^{(t-1)}] \right] \\
  & \quad + \frac{n + m\theta}{2\lambda} 
  \tE\|\Delta \bar{\alpha}^{(t)} + \bar{\alpha}^{(t-1)}\|^2 -
  \frac{n - m\theta }{2\lambda}  \|\bar{\alpha}^{(t-1)}\|^2\\
  &= -\frac{n}{\lambda} \|\tE \Delta \bar{\alpha}^{(t)} \|^2 +
  \frac{n+m\theta}{2\lambda} \tE \|\Delta
  \bar{\alpha}^{(t)}\|^2 \\
  &= \frac{n+m\theta}{2\lambda} \left[ \tE \|\Delta
    \bar{\alpha}^{(t)}\|^2 - \| \tE \Delta \bar{\alpha}^{(t)} \|^2
  \right] - \frac{n-m\theta}{2\lambda}
  \|\tE \Delta   \bar{\alpha}^{(t)} \|^2 \\
  &=\frac{n+m\theta}{2\lambda} \tE\left[ \|\Delta \bar{\alpha}^{(t)} -
    \tE \Delta \bar{\alpha}^{(t)} \|^2 \right] -
  \frac{n-m\theta}{2\lambda} \|\tE \Delta \bar{\alpha}^{(t)} \|^2 ~.
\end{align*}

\paragraph{Step 3:} Next, we upper bound $\tE[a_I^{(t)}]$. 

Using the definitions of $x^{(t)}$ and $u^{(t-1)}$
and the smoothness of $g^*$ we have
\[
\|x^{(t)}- u^{(t-1)}\| = \theta \| \nabla g^*(\bar{\alpha}^{(t)}) -
\nabla g^*(\bar{\alpha}^{(t-1)}) \| = \frac{\theta}{\lambda}  \| \bar{\alpha}^{(t)}-
\bar{\alpha}^{(t-1)} \| ~.
\]
Therefore, 
\[
a_I^{(t)} = \frac{m\,\theta^2}{2\gamma\lambda^2}  \| \Delta
\bar{\alpha}^{(t)} \|^2 - \frac{\theta (1-\theta)\gamma}{2} \sum_{i \in I} \|\alpha_i^{(t-1)}+\nabla  \phi_i (u^{(t-1)})\|^2 ~.
\]
Taking expectation we obtain
\begin{align*}
\tE a_I^{(t)} &= \frac{m\,\theta^2}{2\gamma\lambda^2}  \tE \| \Delta
\bar{\alpha}^{(t)} \|^2 - \frac{\theta (1-\theta)\gamma}{2} \sum_{i
  =1}^n \|\alpha_i^{(t-1)}+\nabla  \phi_i (u^{(t-1)})\|^2 \,\tE[1[i \in
I]] \\
&=\frac{m\,\theta^2}{2\gamma\lambda^2}  \tE \| \Delta
\bar{\alpha}^{(t)} \|^2 - \frac{\theta (1-\theta)\gamma m}{2n} \sum_{i
  =1}^n \|\alpha_i^{(t-1)}+\nabla  \phi_i (u^{(t-1)})\|^2  \\
&\le \frac{m\,\theta^2}{2\gamma\lambda^2}  \tE \| \Delta
\bar{\alpha}^{(t)} \|^2 - \frac{\theta (1-\theta)\gamma m}{2n} \cdot
\frac{n^3}{m \theta^2} \,\tE \|\Delta \bar{\alpha}^{(t)}-\tE \Delta \bar{\alpha}^{(t)}\|^2 ~,
\end{align*}
where the last inequality follows from \lemref{lem:var}. Using the
equality 
\[\tE\|\Delta \bar{\alpha}^{(t)}\|^2 =\tE \|\Delta
\bar{\alpha}^{(t)}-\tE \Delta \bar{\alpha}^{(t)}\|^2 + \|\tE\Delta
\bar{\alpha}^{(t)}\|^2
\]
we obtain that
\[
\tE a_I^{(t)} \le \frac{m\,\theta^2}{2\gamma\lambda^2}\|\tE\Delta
\bar{\alpha}^{(t)}\|^2 - \left(\frac{(1-\theta)\gamma n^2}{2\theta} -
  \frac{m\,\theta^2}{2\gamma\lambda^2}\right) \tE \|\Delta
\bar{\alpha}^{(t)}-\tE \Delta \bar{\alpha}^{(t)}\|^2 ~.
\]

\paragraph{Step 4:} To conclude the proof, we combine the bounds derived
in the three steps above and get that
\begin{align*}
&\tE[a_I^{(t)}+b_I^{(t)}+c_I^{(t)}] \le \\ 
&\left(\frac{m\,\theta^2}{2\gamma\lambda^2} - \frac{n-m\theta}{2\lambda}\right)\|\tE\Delta
\bar{\alpha}^{(t)}\|^2 + \left(\frac{m\,\theta^2}{2\gamma\lambda^2} + \frac{n+m\theta}{2\lambda}-\frac{(1-\theta)\gamma n^2}{2\theta} \right) \tE \|\Delta
\bar{\alpha}^{(t)}-\tE \Delta \bar{\alpha}^{(t)}\|^2 ~.
\end{align*}
A sufficient condition for the above to be non-positive is that 
\begin{align*}
&1.~~m \theta^2 + m \theta \gamma \lambda -  \gamma \lambda n\le 0 \\
&2.~~m \theta^3 + \gamma \lambda \theta(n+m\theta) - (1-\theta)
(\gamma \lambda n)^2 \le 0
\end{align*}
Let us require the stronger conditions
\begin{align*}
&1.1~~m \theta^2 -  \gamma \lambda n/2 \le 0 \Rightarrow \theta \le
\sqrt{0.5\,\gamma \lambda n/(m)}\\
&1.2~~m \theta \gamma \lambda -  \gamma \lambda n/2\le 0  \Rightarrow
\theta \le 0.5 \, (n/m)\\
&2.1~~m \theta^3 - (\gamma \lambda n)^2/4 \le 0 \Rightarrow \theta \le
\left( \frac{(\gamma \lambda n)^2}{4m} \right)^{1/3} \\
&2.2~~\gamma \lambda \theta n - (\gamma \lambda n)^2/4 \le 0 \Rightarrow
\theta \le \gamma \lambda n / 4 \\
&2.3~~ \gamma \lambda \theta^2 m - (\gamma \lambda n)^2/4 \le 0 \Rightarrow
\theta \le \frac{\gamma \lambda n/2}{\sqrt{\gamma \lambda m}} =
0.5~ \sqrt{\gamma \lambda n \cdot \frac{n}{m}} \\
&2.4~~ \theta (\gamma \lambda n)^2- (\gamma \lambda n)^2/4 \le 0 \Rightarrow
\theta \le 1/4
\end{align*}
An even strong condition is
\begin{equation}
\theta \le \frac{1}{4} \min\left\{
1 ~,~ \sqrt{\frac{\gamma \lambda n}{m}} ~,~ \gamma \lambda n ~,~
\frac{(\gamma \lambda n)^{2/3}}{m^{1/3}} \right\} ~.
\end{equation}

\end{proof}

\begin{proof}[{\bf Proof of \thmref{thm:main}}]
Combining \lemref{lem:technical}
and \lemref{lem:negExpectation} yields
\begin{align*}
\tE\left[m \Delta P(x^{(t)}) + n \Delta D(\alpha^{(t)}) \right] \le \left(1-\frac{\theta m}{n}\right) \tE\left[ m \Delta P(x^{(t-1)})+ n \Delta
  D(\alpha^{(t-1)})\right] ~.
\end{align*}
Taking expectation with respect to the randomness in previous rounds,
using the law of total probability, and applying the above inequality
recursively, we conclude our proof.
\end{proof}

\section{Discussion and Related Work} \label{sec:related}

We have introduced an accelerated version of stochastic dual
coordinate ascent with mini-batches. We have shown, both theoretically
and empirically, that the resulting algorithm interpolates between the
vanilla stochastic coordinate descent algorithm and the accelerated
gradient descent algorithm. 

Using mini-batches in stochastic learning has received a lot of
attention in recent years.  E.g. \cite{ShalevSiSr07} reported
experiments showing that applying small mini-batches in Stochastic
Gradient Descent (SGD) decreases the required number of
iterations. \cite{dekel2012optimal} and \cite{agarwal2012distributed}
gave an analysis of SGD with mini-batches for smooth loss
functions. \cite{cotter2011better} studied SGD and accelerated
versions of SGD with mini-batches and \cite{takac2013mini} studied
SDCA with mini-batches for SVMs. \cite{duchi2010distributed} studied
dual averaging in distributed networks as a function of spectral
properties of the underlying graph. However, all of these methods have
a polynomial dependence on $1/\epsilon$, while we consider the
strongly convex and smooth case in which a $\log(1/\epsilon)$ rate is
achievable.\footnote{It should be noted that one can use our results
  for Lipschitz functions as well by smoothing the loss function (see
  \cite{nesterov2005smooth}). By doing so, we can interpolate between
  the $1/\epsilon^2$ rate of non-accelerated method and the
  $1/\epsilon$ rate of accelerated gradient.}

It is interesting to note that most\footnote{There are few exceptions
  in the context of stochastic coordinate descent \emph{in the primal}. See
  for example \cite{bradley2011parallel,richtarik2012parallel}} of these papers focus on
mini-batches as the method of choice for distributing SGD or SDCA, while ignoring
the option to divide the data by features instead of by examples. A
possible reason is the cost of opening communication sockets as
discussed in \secref{sec:parallel}. 

There are various practical considerations that one should take into
account when designing a practical system for distributed
optimization. We refer the reader, for example, to
\cite{dekeldistribution,low2010graphlab,low2012distributed,agarwal2011reliable,niu2011hogwild}.

The more general problem of distributed PAC learning has been studied
recently in \cite{daume2012protocols,balcan2012distributed}. See also
\cite{long2011algorithms}. In particular, they obtain algorithm with
$O(\log(1/\epsilon))$ communication complexity. However, these works
consider efficient algorithms only in the realizable case.

\bibliographystyle{plainnat}
\bibliography{curRefs}

\begin{thebibliography}{20}
\providecommand{\natexlab}[1]{#1}
\providecommand{\url}[1]{\texttt{#1}}
\expandafter\ifx\csname urlstyle\endcsname\relax
  \providecommand{\doi}[1]{doi: #1}\else
  \providecommand{\doi}{doi: \begingroup \urlstyle{rm}\Url}\fi

\bibitem[Agarwal and Duchi(2012)]{agarwal2012distributed}
Alekh Agarwal and John~C Duchi.
\newblock Distributed delayed stochastic optimization.
\newblock In \emph{Decision and Control (CDC), 2012 IEEE 51st Annual Conference
  on}, pages 5451--5452. IEEE, 2012.

\bibitem[Agarwal et~al.(2011)Agarwal, Chapelle, Dud{\'\i}k, and
  Langford]{agarwal2011reliable}
Alekh Agarwal, Olivier Chapelle, Miroslav Dud{\'\i}k, and John Langford.
\newblock A reliable effective terascale linear learning system.
\newblock \emph{arXiv preprint arXiv:1110.4198}, 2011.

\bibitem[Balcan et~al.(2012)Balcan, Blum, Fine, and
  Mansour]{balcan2012distributed}
Maria-Florina Balcan, Avrim Blum, Shai Fine, and Yishay Mansour.
\newblock Distributed learning, communication complexity and privacy.
\newblock \emph{arXiv preprint arXiv:1204.3514}, 2012.

\bibitem[Bradley et~al.(2011)Bradley, Kyrola, Bickson, and
  Guestrin]{bradley2011parallel}
Joseph~K Bradley, Aapo Kyrola, Danny Bickson, and Carlos Guestrin.
\newblock Parallel coordinate descent for l1-regularized loss minimization.
\newblock In \emph{ICML}, 2011.

\bibitem[Cotter et~al.(2011)Cotter, Shamir, Srebro, and
  Sridharan]{cotter2011better}
Andrew Cotter, Ohad Shamir, Nathan Srebro, and Karthik Sridharan.
\newblock Better mini-batch algorithms via accelerated gradient methods.
\newblock \emph{arXiv preprint arXiv:1106.4574}, 2011.

\bibitem[Daume~III et~al.(2012)Daume~III, Phillips, Saha, and
  Venkatasubramanian]{daume2012protocols}
Hal Daume~III, Jeff~M Phillips, Avishek Saha, and Suresh Venkatasubramanian.
\newblock Protocols for learning classifiers on distributed data.
\newblock \emph{arXiv preprint arXiv:1202.6078}, 2012.

\bibitem[Dekel(2010)]{dekeldistribution}
Ofer Dekel.
\newblock Distribution-calibrated hierarchical classification.
\newblock In \emph{NIPS}, 2010.

\bibitem[Dekel et~al.(2012)Dekel, Gilad-Bachrach, Shamir, and
  Xiao]{dekel2012optimal}
Ofer Dekel, Ran Gilad-Bachrach, Ohad Shamir, and Lin Xiao.
\newblock Optimal distributed online prediction using mini-batches.
\newblock \emph{The Journal of Machine Learning Research}, 13:\penalty0
  165--202, 2012.

\bibitem[Duchi et~al.(2010)Duchi, Agarwal, and
  Wainwright]{duchi2010distributed}
John Duchi, Alekh Agarwal, and Martin~J Wainwright.
\newblock Distributed dual averaging in networks.
\newblock \emph{Advances in Neural Information Processing Systems}, 23, 2010.

\bibitem[{Le Roux} et~al.(2012){Le Roux}, {Schmidt}, and {Bach}]{LSB12-sgdexp}
Nicolas {Le Roux}, Mark {Schmidt}, and Francis {Bach}.
\newblock {A Stochastic Gradient Method with an Exponential Convergence Rate
  for Strongly-Convex Optimization with Finite Training Sets}.
\newblock \emph{arXiv preprint arXiv:1202.6258}, 2012.

\bibitem[Long and Servedio(2011)]{long2011algorithms}
Phil Long and Rocco Servedio.
\newblock Algorithms and hardness results for parallel large margin learning.
\newblock In \emph{NIPS}, 2011.

\bibitem[Low et~al.(2010)Low, Gonzalez, Kyrola, Bickson, Guestrin, and
  Hellerstein]{low2010graphlab}
Yucheng Low, Joseph Gonzalez, Aapo Kyrola, Danny Bickson, Carlos Guestrin, and
  Joseph~M Hellerstein.
\newblock Graphlab: A new framework for parallel machine learning.
\newblock \emph{arXiv preprint arXiv:1006.4990}, 2010.

\bibitem[Low et~al.(2012)Low, Bickson, Gonzalez, Guestrin, Kyrola, and
  Hellerstein]{low2012distributed}
Yucheng Low, Danny Bickson, Joseph Gonzalez, Carlos Guestrin, Aapo Kyrola, and
  Joseph~M Hellerstein.
\newblock Distributed graphlab: A framework for machine learning and data
  mining in the cloud.
\newblock \emph{Proceedings of the VLDB Endowment}, 5\penalty0 (8):\penalty0
  716--727, 2012.

\bibitem[Nesterov(2005)]{nesterov2005smooth}
Yurii Nesterov.
\newblock Smooth minimization of non-smooth functions.
\newblock \emph{Mathematical Programming}, 103\penalty0 (1):\penalty0 127--152,
  2005.

\bibitem[Nesterov(2007)]{nesterov2007gradient}
Yurii Nesterov.
\newblock Gradient methods for minimizing composite objective function, 2007.

\bibitem[Niu et~al.(2011)Niu, Recht, R{\'e}, and Wright]{niu2011hogwild}
Feng Niu, Benjamin Recht, Christopher R{\'e}, and Stephen~J Wright.
\newblock Hogwild!: A lock-free approach to parallelizing stochastic gradient
  descent.
\newblock \emph{arXiv preprint arXiv:1106.5730}, 2011.

\bibitem[Richt{\'a}rik and Tak{\'a}{\v{c}}(2012)]{richtarik2012parallel}
Peter Richt{\'a}rik and Martin Tak{\'a}{\v{c}}.
\newblock Parallel coordinate descent methods for big data optimization.
\newblock \emph{arXiv preprint arXiv:1212.0873}, 2012.

\bibitem[Shalev-Shwartz et~al.(2007)Shalev-Shwartz, Singer, and
  Srebro]{ShalevSiSr07}
S.~Shalev-Shwartz, Y.~Singer, and N.~Srebro.
\newblock Pegasos: {P}rimal {E}stimated sub-{G}r{A}dient {SO}lver for {SVM}.
\newblock In \emph{INTERNATIONAL CONFERENCE ON MACHINE LEARNING}, pages
  807--814, 2007.

\bibitem[Shalev-Shwartz and Zhang(2013)]{ShalevZh2013}
Shai Shalev-Shwartz and Tong Zhang.
\newblock Stochastic dual coordinate ascent methods for regularized loss
  minimization.
\newblock \emph{Journal of Machine Learning Research}, 14:\penalty0 567--599,
  Feb 2013.

\bibitem[Tak{\'a}c et~al.(2013)Tak{\'a}c, Bijral, Richt{\'a}rik, and
  Srebro]{takac2013mini}
Martin Tak{\'a}c, Avleen Bijral, Peter Richt{\'a}rik, and Nathan Srebro.
\newblock Mini-batch primal and dual methods for svms.
\newblock arxiv, 2013.

\end{thebibliography}

\end{document}